\definecolor{darkslategray}{rgb}{0.18, 0.31, 0.31} %#2F4F4F
\definecolor{platinum}{rgb}{0.9, 0.89, 0.89} %#E5E4E2
\definecolor{gray}{rgb}{.4,.4,.4}
\definecolor{midgrey}{rgb}{0.5,0.5,0.5}
\definecolor{middarkgrey}{rgb}{0.35,0.35,0.35}
\definecolor{darkgrey}{rgb}{0.3,0.3,0.3}
\definecolor{darkred}{rgb}{0.7,0.1,0.1}
\definecolor{midblue}{rgb}{0.2,0.2,0.7}
\definecolor{darkblue}{rgb}{0.1,0.1,0.5}
\definecolor{darkgreen}{rgb}{0.1,0.5,0.1}
\definecolor{defseagreen}{cmyk}{0.69,0,0.50,0}
\def\pdfauthor{X. Huang and J. Marques-Silva}
\def\thm@space@setup{\thm@preskip=5.0pt
\thm@postskip=0pt}
\newtheoremstyle{newstyle}      
\theoremstyle{newstyle}
\newtheorem{proposition}{Proposition}
\newtheorem{example}{Example}
\newtheorem{remark}{Remark}
\crefname{theorem}{Theorem}{Theorems}
\crefname{lemma}{Lemma}{Lemmas}
\crefname{proposition}{Proposition}{Propositions}
\crefname{definition}{Definition}{Definitions}
\crefname{corollary}{Corollary}{Corollaries}
\crefname{example}{Example}{Examples}
\crefname{claim}{Claim}{Claims}
\crefname{assumption}{Assumption}{Assumptions}
\crefname{enumi}{}{}
\renewenvironment{proof}[1][\proofname]{\par
  \pushQED{\qed}%
%  \normalfont \topsep0\p@\relax
  \normalfont \topsep-2\p@\relax
  \trivlist
  \item[\hskip\labelsep\itshape
  #1\@addpunct{.}]\ignorespaces
}{%
  \popQED\endtrivlist\@endpefalse
}
\newcommand{\fml}[1]{{\mathcal{#1}}}
\newcommand{\tn}[1]{\textnormal{#1}}
\newcommand{\tbf}[1]{\textbf{#1}}
\newcommand{\mbf}[1]{\ensuremath\mathbf{#1}}
\newcommand{\msf}[1]{\ensuremath\mathsf{#1}}
\newcommand{\mbb}[1]{\ensuremath\mathbb{#1}}
\newcommand{\waxp}{\ensuremath\mathsf{WAXp}}
\newcommand{\wcxp}{\ensuremath\mathsf{WCXp}}
\newcommand{\axp}{\ensuremath\mathsf{AXp}}
\newcommand{\cxp}{\ensuremath\mathsf{CXp}}
\newcommand{\shap}{\ensuremath\msf{SHAP}}
\newcommand{\sv}{\ensuremath\msf{Sv}}
\newcommand{\oper}[1]{\ensuremath\mathsf{#1}}
\newcommand{\bigland}{\ensuremath\bigwedge}
\newcommand{\relevant}{\oper{Relevant}}
\newcommand{\irrelevant}{\oper{Irrelevant}}
\newcounter{tableeqn}[table]
\DeclareMathOperator*{\entails}{\vDash}
\DeclareMathOperator*{\limply}{\rightarrow}
\newcommand{\xnote}[1]{\medskip\noindent$\llbracket$\textcolor{darkred}{xiang}: \emph{\textcolor{middarkgrey}{#1}}$\rrbracket$\medskip}
\newcommand{\jnoteF}[1]{}
\newcolumntype{L}[1]{>{\raggedright\let\newline\\\arraybackslash\hspace{0pt}}m{#1}}
\newcolumntype{C}[1]{>{\centering\let\newline\\\arraybackslash\hspace{0pt}}m{#1}}
\newcolumntype{R}[1]{>{\raggedleft\let\newline\\\arraybackslash\hspace{0pt}}m{#1}}
\tikzset{
  0 my edge/.style={densely dashed, my edge},
  my edge/.style={-{Stealth[]}},
}
\def\HiLi{\leavevmode\rlap{\hbox to \linewidth{\color{platinum}\leaders\hrule height .8\baselineskip depth .5ex\hfill}}}
\titlespacing{\section}{0pt}{*2.15}{*1.0}
\titlespacing{\subsection}{0pt}{*1.25}{*0.75}
\titlespacing{\subsubsection}{0pt}{*0.35}{*0.5}
\titlespacing{\paragraph}{0pt}{*0.1}{*0.575}
\newcommand\nparagraph{%
  \@startsection{paragraph}
    {4}
    {\z@}
%    {3.25ex \@plus1ex \@minus.2ex}
    {0.225ex \@plus0.225ex \@minus.125ex}
    {-1em}
    {\normalfont\normalsize\bfseries}%
}
\setlist{nosep,leftmargin=0.45cm}
\algnewcommand{\LineComment}[1]{\Statex \hskip\ALG@thistlm \(\triangleright\) #1}
\title{%
  A Refutation of Shapley Values for Explainability
  %
  %Shapley Values for Explainability Refuted
  %
  %A Theoretical Refutation of\\%
  %Shapley Values for Explainability
  %
  %A Theoretical Deconstruction of\\%
  %Shapley Values for Explainability
  %
  %Debunking Shapley Values for Explainability
  %
  %A Categorical Deconstruction of\\% Comprehensive?
  %Shapley Values for Explainability
  %
  %A Deconstruction of Shapley Values for Explainability
  %
  %Thou Shall NOT Explain with Shapley Values
  %
  %Thou Shall Distrust Shapley Values for Explainability
  %
  %Feature Attribution with Shapley Values fails to Capture Feature
  %Sufficiency for Explanations 
}
\author{%
  Xuanxiang Huang
  %\thanks{Use footnote for providing further information
  %  about author (webpage, alternative address)---\emph{not} for acknowledging
  %  funding agencies.}
  \\
  University of Toulouse \\
  Toulouse, France \\
  \texttt{xuanxiang.huang@univ-toulouse.fr} \\
  \And
  Joao Marques-Silva \\
  IRIT, CNRS, France \\
  Toulouse, France\\
  \texttt{joao.marques-silva@irit.fr} \\
  %
  % examples of more authors
  % \And
  % Coauthor \\
  % Affiliation \\
  % Address \\
  % \texttt{email} \\
  % \AND
  % Coauthor \\
  % Affiliation \\
  % Address \\
  % \texttt{email} \\
  % \And
  % Coauthor \\
  % Affiliation \\
  % Address \\
  % \texttt{email} \\
  % \And
  % Coauthor \\
  % Affiliation \\
  % Address \\
  % \texttt{email} \\
}
\begin{document}

\maketitle

%------------------------------------------------------------------------------%
% File:        abs.tex
%------------------------------------------------------------------------------%
%
\begin{abstract}
  Recent work demonstrated the existence of Boolean functions for
  which Shapley values provide misleading information about the
  relative importance of features in rule-based explanations.
  Such misleading information was broadly categorized into a number of
  possible issues. Each of those issues relates with features being
  relevant or irrelevant for a prediction, and all are significant
  regarding the inadequacy of Shapley values for rule-based
  explainability.
  %
  %As a result, such issues are referred to as inadequacy-revealing.
  %
  This earlier work devised a brute-force approach to identify Boolean
  functions, defined on small numbers of features, and also associated
  instances, which displayed such inadequacy-revealing issues, and so
  served as evidence to the inadequacy of Shapley values for
  rule-based explainability.
  However, an outstanding question is how frequently such
  inadequacy-revealing issues can occur for Boolean functions with
  arbitrary large numbers of features. It is plain that a brute-force
  approach would be unlikely to provide insights on how to tackle this
  question.
  %
  %%Motivated by this earlier work,
  This paper answers the above question by proving that, for any
  number of features, there exist Boolean functions that exhibit one
  or more inadequacy-revealing issues, thereby contributing %even more
  decisive arguments against the use of Shapley values as the
  theoretical underpinning of feature-attribution methods in
  explainability.
\end{abstract}
%
%------------------------------------------------------------------------------%

%
\section{Introduction} \label{sec:intro}

Feature attribution is one of the most widely used approaches in
machine learning (ML) explainability, begin implemented with a variety
of different methods~\cite{kononenko-jmlr10,guestrin-kdd16,samek-bk19}.
Moreover, the use of Shapley values~\cite{shapley-ctg53} for feature
attribution ranks among the most popular
solutions~\cite{kononenko-jmlr10,kononenko-kis14,lundberg-nips17,lundberg-nips20,lundberg-naturemi20},
offering a widely accepted theoretical justification on how to assign
importance to features in machine learning (ML) model predictions.
Despite the success of using Shapley values for explainability, it is
also the case that their exact computation is in general
intractable~\cite{barcelo-aaai21,vandenbroeck-aaai21,vandenbroeck-jair22},
with tractability results for some families of boolean
circuits~\cite{barcelo-aaai21}.
As a result, a detailed assessment of the rigor of feature attribution
methods based on Shapley values, when compared with exactly computed
Shapley values has not been investigated.
Furthermore, the definition Shapley values (as well as its use in
explainability) is purely axiomatic, i.e.\ there exists \emph{no}
formal proof that Shapley values capture any specific properties
related with explainability (even if defining such properties might
prove elusive).

Feature selection represents a different alternative to feature
attribution.
The goal of feature selection is to select a set of features as
representing the reason for a prediction, i.e.\ if the selected
features take their assigned values, then the prediction cannot be
changed.
There are rigorous and non-rigorous approaches for selecting the
features that explain a prediction. This paper considers rigorous (or
model-precise) approaches for selecting such features.
Furthermore, it should be plain that feature selection must aim for
irredundancy, since otherwise it would suffice to report all features
as the explanation.
Given the universe of possible irreducible sets of feature selections
that explain a prediction, the features that do not occur in
\emph{any} such set are deemed \emph{irrelevant} for a prediction;
otherwise features that occur in one or more feature selections are
deemed \emph{relevant}.

Since both feature attribution and feature selection measure
contributions of features to explanations, one would expect that the
two approaches were related. However, this is not the case.
Recent work~\cite{hms-corr23} observed that feature attribution based
on Shapley values could produce \emph{misleading information} about
features, in that irrelevant features (for feature selection) could be
deemed more important (in terms of feature attribution) than relevant
features (also for feature selection).
Clearly, misleading information about the relative importance of
features can easily induce human decision makers in error, by
suggesting the \emph{wrong} features as those to analyze in greater
detail. Furthermore, situations where human decision makers can be
misled are inadmissible in high-risk or safety-critical uses of ML.
Furthermore, a number of possible misleading issues of Shapley values
for explainability were identified~\cite{hms-corr23}, and empirically
demonstrated to occur for some boolean functions.
%defined on boolean functions of up to four variables.
%
The existence in practice of those misleading issues with Shapley
values for explainability is evidently problematic for their use as
the theoretical underpinning of feature attribution methods.

However, earlier work~\cite{hms-corr23} used a brute-force method to
identify boolean functions, defined on a very small number of
variables, where the misleading issues could be observed.
%documented.
%
A limitation of this earlier work~\cite{hms-corr23} is that it offered 
no insights on how general the issues with Shapley values for
explainability are. For example, it could be the case that the
identified misleading issues might only occur for functions defined on
a very small number of variables, or in a negligible number of
functions, among the universe of functions defined on a given number
of variables.
If that were to be the case, then the issues with Shapley values for
explainability might not be that problematic. %troublesome.

This paper proves that the identified misleading issues with Shapley
values for explainability are much more general that what was reported
in earlier work~\cite{hms-corr23}. Concretely, the paper proves that,
for any number of features larger than a small $k$ (either 2 or 3),
one can easily construct functions which exhibit the identified
misleading issues. %with Shapley values for explainability.
The main implication of our results is clear: \emph{the use of Shapley
values for explainability can, for an arbitrary large number of
boolean (classification) functions, produce misleading information
about the relative importance of features}.

\paragraph{Organization.}
%~\\
%
The paper is organized as follows.
\cref{sec:prelim} introduces the notation and definitions used
throughout the paper.
\cref{sec:cmp} revisits and extends the issues with Shapley values for
explainability reported in earlier work~\cite{hms-corr23}, and
illustrates the existence of those issues in a number of motivating
example boolean functions.
\cref{sec:negres} presents the paper's main results, proving that all
the issues with Shapley values for explainability reported in earlier
work~\cite{hms-corr23} occur for boolean functions with arbitrarily
larger number of variables.
(Due to lack of space, the detailed proofs are all included
in~\cref{sec:proofs}, and the paper includes only brief insights into
those proofs.)
Also, the proposed constructions offer ample confidence that the
number of functions displaying one or more of the issues is
significant.
\cref{sec:conc} concludes the paper.

\jnoteF{Shapley values represent one of the theoretical underpinnings
  of several approaches for assessing feature importance in
  ML~\cite{kononenko-jmlr10,kononenko-kis14,lundberg-nips17,lundberg-nips20,lundberg-naturemi20}. However,
  other feature attribution methods exist~\cite{guestrin-kdd16,samek-bk19}.}

\jnoteF{Similarly to earlier work~\cite{hms-corr23}, this paper also
  subscribes to the critical importance of feature relevancy, and the
  necessity of relative rankings of features to respect feature
  (ir)relevancy.
}

\jnoteF{Uses of features in rule-based explanations, which capture
  sufficient conditions for a prediction, e.g.\ as used in
  Anchors~\cite{guestrin-aaai18}.}

\jnoteF{We look at the features used in any irreducible rule-based
  explanation. If a feature is not used in any rule, then it is
  deemed \emph{irrelevant}. Otherwise, if a feature occurs in
  some rule-based explanation, then it is deemed \emph{relevant}.}

\jnoteF{Recent work has shown that irrelevant features can have larger
  absolute Shapley values than relevant features.}

\jnoteF{The situation is thoroughly unsettling in high-risk and
  safety-critical uses of XAI.}

\begin{comment}
%
\paragraph{Organization.}
%~\\
%
The paper is organized as follows.
%
\cref{sec:prelim} introduces the notation and definitions used
throughout the paper, and briefly summarizes related work.
%
\cref{sec:cmp} revisits the issues with Shapley values identified in
earlier work~\cite{hms-corr23}, proposing minor changes that will
prove important in the results to be proved.
%
\cref{sec:negres} presents the main results of our work. (Due to lack
of space, the detailed proofs are all included in~\cref{sec:proofs},
and the paper includes only brief insights into those proofs.)
%
Finally, \cref{sec:conc} concludes the paper.
%
\end{comment}

\section{Preliminaries} \label{sec:prelim}

\paragraph{Boolean functions.}
%~\\
%
Let $\mbb{B}=\{0,1\}$. The results in the paper consider boolean
functions, defined on $m$ boolean variables,
i.e.\ $\kappa:\mbb{B}^{m}\to\mbb{B}$.
(The fact that we consider only boolean functions does not restrict in
the significance of the results.)

In the rest of the paper, we will use the boolean functions shown
in~\cref{fig:runex}, which are represented by truth tables.
The highlighted rows will serve as concrete examples throughout.

\begin{figure*}
  \begin{minipage}{0.325\textwidth}
    \begin{subfigure}[t]{1\textwidth}
      \centering
      \renewcommand{\arraystretch}{0.95}
      \renewcommand{\tabcolsep}{0.5em}
      \scalebox{0.975}{
        \begin{tabular}{cccc} \toprule
          $x_1$ & $x_2$ & $x_3$ & $\kappa_{I1}(\mbf{x})$ \\ \toprule
          0 & 0 & 0 & 0 \\
          \tikzmarknode{a}{0} & 0 & 1 & \tikzmarknode{b}{0} \\
          0 & 1 & 0 & 0 \\
          0 & 1 & 1 & 0 \\
          1 & 0 & 0 & 0 \\
          1 & 0 & 1 & 1 \\
          1 & 1 & 0 & 1 \\
          1 & 1 & 1 & 1 \\
          \bottomrule
        \end{tabular}
      }
      \caption{Function $\kappa_{I1}$} \label{ex:k1}
      %\DrawBox[thick]
      \begin{tikzpicture}[overlay,remember picture]
        \node[draw=midblue, thin, xshift=-0.35pt, yshift=-0.5pt, inner sep=1.5pt, fit=(a) (b)] {};
      \end{tikzpicture}
    \end{subfigure}

    \begin{subfigure}[b]{1\textwidth}
      \centering
      \renewcommand{\arraystretch}{0.95}
      \renewcommand{\tabcolsep}{0.5em}
      \scalebox{0.95}{
        \begin{tabular}{cccc} \toprule
          $x_1$ & $x_2$ & $x_3$ & $\kappa_{I3}(\mbf{x})$ \\ \toprule
          0 & 0 & 0 & 0\\
          0 & 0 & 1 & 0\\
          0 & 1 & 0 & 0\\
          0 & 1 & 1 & 1\\
          1 & 0 & 0 & 1\\
          1 & 0 & 1 & 0\\
          1 & 1 & 0 & 1\\
          \tikzmarknode{c}{1} & 1 & 1 & \tikzmarknode{d}{1}\\
          \bottomrule
        \end{tabular}
      }
      \caption{Function $\kappa_{I3}$} \label{ex:k3}
      %\DrawBox[thick]
      \begin{tikzpicture}[overlay,remember picture]
        \node[draw=midblue, thin, xshift=-0.75pt, yshift=1.75pt, inner sep=1.5pt, fit=(c) (d)] {};
      \end{tikzpicture}
    \end{subfigure}
  \end{minipage}
  \begin{minipage}{0.325\textwidth}
    \begin{subfigure}{1\textwidth}
      \centering
      \renewcommand{\arraystretch}{0.95}
      \renewcommand{\tabcolsep}{0.5em}
      \scalebox{0.95}{
        \begin{tabular}[t]{ccccc} \toprule
          $x_1$ & $x_2$ & $x_3$ & $x_4$ & $\kappa_{I4}(\mbf{x})$ \\ \toprule
          0 & 0 & 0 & 0 & 0\\
          0 & 0 & 0 & 1 & 0\\
          0 & 0 & 1 & 0 & 0\\
          \tikzmarknode{e}{0} & 0 & 1 & 1 & \tikzmarknode{f}{0}\\
          0 & 1 & 0 & 0 & 0\\
          0 & 1 & 0 & 1 & 0\\
          0 & 1 & 1 & 0 & 0\\
          0 & 1 & 1 & 1 & 1\\
          1 & 0 & 0 & 0 & 0\\
          1 & 0 & 0 & 1 & 0\\
          1 & 0 & 1 & 0 & 1\\
          1 & 0 & 1 & 1 & 0\\
          1 & 1 & 0 & 0 & 1\\
          1 & 1 & 0 & 1 & 1\\
          1 & 1 & 1 & 0 & 1\\
          1 & 1 & 1 & 1 & 1\\
          \bottomrule
        \end{tabular}
      }
      \caption{Function $\kappa_{I4}$} \label{ex:k4}
      %\DrawBox[thick]
      \begin{tikzpicture}[overlay,remember picture]
        \node[draw=midblue, thin, xshift=-1.5pt, yshift=2.5pt, inner sep=1.5pt, fit=(e) (f)] {};
      \end{tikzpicture}
    \end{subfigure}
  \end{minipage}
  \begin{minipage}{0.325\textwidth}
    \begin{subfigure}[t]{1\textwidth}
      \centering
      \renewcommand{\arraystretch}{0.95}
      \renewcommand{\tabcolsep}{0.5em}
      \scalebox{0.95}{
        \begin{tabular}{ccccc} \toprule
          $x_1$ & $x_2$ & $x_3$ & $x_4$ & $\kappa_{I5}(\mbf{x})$ \\ \toprule
          0 & 0 & 0 & 0 & 0\\
          0 & 0 & 0 & 1 & 0\\
          0 & 0 & 1 & 0 & 0\\
          0 & 0 & 1 & 1 & 0\\
          0 & 1 & 0 & 0 & 0\\
          0 & 1 & 0 & 1 & 0\\
          0 & 1 & 1 & 0 & 0\\
          0 & 1 & 1 & 1 & 1\\
          1 & 0 & 0 & 0 & 0\\
          1 & 0 & 0 & 1 & 0\\
          1 & 0 & 1 & 0 & 0\\
          1 & 0 & 1 & 1 & 1\\
          1 & 1 & 0 & 0 & 0\\
          1 & 1 & 0 & 1 & 1\\
          1 & 1 & 1 & 0 & 0\\
          \tikzmarknode{g}{1} & 1 & 1 & 1 & \tikzmarknode{h}{0}\\
          \bottomrule
        \end{tabular}
      }
      \caption{Function $\kappa_{I5}$} \label{ex:k5}
      %\DrawBox[thick]
      \begin{tikzpicture}[overlay,remember picture]
        \node[draw=midblue, thin, xshift=-1.25pt, yshift=3.75pt, inner sep=1.5pt, fit=(g) (h)] {};
      \end{tikzpicture}
    \end{subfigure}
  \end{minipage}
  \caption{Example functions for issues \cref{en:i1}, \cref{en:i3},
    \cref{en:i4}, \cref{en:i5}, respectively $\kappa_{I1}$,
    $\kappa_{I3}$, $\kappa_{I4}$, $\kappa_{I5}$} \label{fig:runex}
\end{figure*}

%\subsection{Classification Problems}
%\paragraph{Classification problems.}
\paragraph{Classification in ML.}
%~\\
%
A classification problem is defined on a set of features
$\fml{F}=\{1,\ldots,m\}$, each with domain $\mbb{D}_i$, and a set of
classes $\fml{K}=\{c_1,c_2,\ldots,c_K\}$.
(As noted above, we will assume $\mbb{D}_i=\mbb{B}$ for
$1\le{i}\le{m}$, but domains could be categorical or ordinal. Also, we
will assume $\fml{K}=\mbb{B}$.)
Feature space $\mbb{F}$ is defined as the cartesian product of the
domains of the features, in order:
$\mbb{F}=\mbb{D}_1\times\cdots\times\mbb{D}_m$, which will be
$\mbb{B}^m$ throughout the paper.
A classification function is a non-constant map from feature space
into the set of classes, $\kappa:\mbb{F}\to\fml{K}$.
(Clearly, a classifier would be useless if the classification function
were constant.)
Throughout the paper, we will not distinguish between classifiers and
boolean functions.
An instance is a pair $(\mbf{v},c)$ representing a point $\mbf{v}=(v_1,\dots,v_m)$ in
feature space, and the classifier's prediction,
i.e.\ $\kappa(\mbf{v})=c$.
Moreover, we let $\mbf{x}=(x_1,\dots,x_m)$ denote an arbitrary point in the feature space.
Abusing notation, we will also use $\mbf{x}_{a..b}$ to denote $x_a,\ldots,x_b$,
and $\mbf{v}_{a..b}$ to denote $v_a,\ldots,v_b$.
Finally, a classifier $\fml{M}$ is a tuple
$(\fml{F},\mbb{F},\fml{K},\kappa)$.
In addition, an explanation problem $\fml{E}$ is a tuple
$(\fml{M},(\mbf{v},c))$, where
$\fml{M}=(\fml{F},\mbb{F},\fml{K},\kappa)$ is a classifier.

%\subsection{Shapley Values for Explainability}
\paragraph{Shapley values for explainability.}
%~\\
%
Shapley values were first introduced by
L.~Shapley~\cite{shapley-ctg53} in the context of game theory.
Shapley values have been extensively used for explaining the
predictions of ML models,
e.g.~\cite{kononenko-jmlr10,kononenko-kis14,zick-sp16,lundberg-nips17,jordan-iclr19,taly-cdmake20,lakkaraju-nips21,watson-facct22},
among a vast number of recent examples.
The complexity of computing Shapley values (as proposed in
SHAP~\cite{lundberg-nips17}) has been studied in recent
years~\cite{barcelo-aaai21,vandenbroeck-aaai21,barcelo-corr21,vandenbroeck-jair22}.
This section provides a brief overview of Shapley values. 
Throughout the section, we adapt the notation used in recent
work~\cite{barcelo-aaai21,barcelo-corr21}, which builds on the work
of~\cite{lundberg-nips17}.

Let $\Upsilon:2^{\fml{F}}\to2^{\mbb{F}}$ be defined by%
\footnote{%
When defining concepts, we will show the necessary parameterizations.
However, in later uses, those parameterizations will be omitted, for
simplicity.},
\begin{equation} \label{eq:upsilon}
  \Upsilon(\fml{S};\mbf{v})=\{\mbf{x}\in\mbb{F}\,|\,\land_{i\in\fml{S}}x_i=v_i\}
\end{equation}
i.e.\ for a given set $\fml{S}$ of features, and parameterized by
the point $\mbf{v}$ in feature space, $\Upsilon(\fml{S};\mbf{v})$
denotes all the points in feature space that have in common with
$\mbf{v}$ the values of the features specified by $\fml{S}$. 

Also, let $\phi:2^{\fml{F}}\to\mbb{R}$ be defined by,
\begin{equation} \label{eq:phi}
  \phi(\fml{S};\fml{M},\mbf{v})=\frac{1}{2^{|\fml{F}\setminus\fml{S}|}}\sum_{\mbf{x}\in\Upsilon(\fml{S};\mbf{v})}\kappa(\mbf{x})
\end{equation}
For the purposes of this paper, we consider solely a uniform input
distribution, and so the dependency on the input distribution is not
accounted for. A more general formulation is considered in related
work~\cite{barcelo-aaai21,barcelo-corr21}. However, assuming a uniform
distribution suffices for the purposes of this paper.
As a result, 
given a set $\fml{S}$ of features, $\phi(\fml{S};\fml{M},\mbf{v})$
represents the average value of the classifier over the points of
feature space represented by $\Upsilon(\fml{S};\mbf{v})$.
%
%(A distribution different from uniform could be considered, but it
%would not change the paper's conclusions.)

Finally, let $\sv:\fml{F}\to\mbb{R}$ be defined by\footnote{%
We distinguish $\shap(\cdot;\cdot,\cdot)$ from $\sv(\cdot;\cdot,\cdot)$. Whereas
$\shap(\cdot;\cdot,\cdot)$ represents the value computed by the tool
SHAP~\cite{lundberg-nips17}, $\sv(\cdot;\cdot,\cdot)$ represents the Shapley
value in the context of (feature attribution based) explainability, as
studied in a number of
works~\cite{kononenko-jmlr10,kononenko-kis14,lundberg-nips17,barcelo-aaai21,vandenbroeck-aaai21,vandenbroeck-jair22}. Thus, 
$\shap(\cdot;\cdot,\cdot)$ is a heuristic approximation of $\sv(\cdot;\cdot,\cdot)$.},
\begin{equation} \label{eq:sv}
  \sv(i;\fml{M},\mbf{v})=\sum_{\fml{S}\subseteq(\fml{F}\setminus\{i\})}\frac{|\fml{S}|!(|\fml{F}|-|\fml{S}|-1)!}{|\fml{F}|!}\left(\phi(\fml{S}\cup\{i\};\fml{M},\mbf{v})-\phi(\fml{S};\fml{M},\mbf{v})\right)
\end{equation}
Given an instance $(\mbf{v},c)$, the Shapley value assigned to each
feature measures the \emph{contribution} of that feature with respect
to the prediction. 
A positive/negative value indicates that the feature can contribute to
changing the prediction, whereas a value of 0 indicates no
contribution.
%
%%Moreover, and as our results demonstrate, SHAP never really replicates
%%exact Shapley values. As a result,
%In the rest of the paper, we focus on the relative order of features
%imposed by the computed Shapley values (for explainability}. The
%  motivation is that, even if the computed values are not correct (as
%  in the case of 
%SHAP), then what matters for a human decision maker is the order of
%features.

\jnoteF{To
  cite:~\cite{shapley-ctg53,barcelo-aaai21,vandenbroeck-aaai21,vandenbroeck-jair22}.}

\begin{example} \label{ex:calcsv}
  %%
  %%\jnoteF{Example of computing Sv's}
  We consider the example boolean functions of~\cref{fig:runex}.
  If the functions are represented by a truth table, then the Shapley
  values can be computed in polynomial time on the size of the truth
  table, since the number of subsets considered in~\eqref{eq:sv} is
  also polynomial on the size of the truth table~\cite{hms-corr23}. 
  (Observe that for each subset used in~\eqref{eq:sv}, we can
  use the truth table for computing the average values
  in~\eqref{eq:phi}.)
  For example, for $\kappa_{I1}$ and for the point in feature space
  $(0,0,1)$, one can compute the following Shapley values:
  $\sv(1)=-0.417$,  $\sv(2)=-0.042$, and $\sv(3)=0.083$.
\end{example}

%\subsection{Logic-Based Explanations}
\paragraph{Logic-based explanations.}
%~\\
%
There has been recent work on developing formal definitions of
explanations.
One type of explanations are \emph{abductive
explanations}~\cite{inms-aaai19} (AXp), which corresponds to a
PI-explanations~\cite{darwiche-ijcai18} in the case of boolean  
classifiers. AXp's represent prime implicants of the discrete-valued
classifier function (which computes the predicted class).
AXp's can also be viewed as an instantiation of logic-based
abduction~\cite{gottlob-ese90,selman-aaai90,bylander-aij91,gottlob-jacm95}.
Throughout this paper we will opt to use the acronym AXp to refer to
abductive explanations.

Let us consider a given classifier, computing a classification function
$\kappa$ on feature space $\mbb{F}$, a point $\mbf{v}\in\mbb{F}$, with
prediction $c=\kappa(\mbf{v})$, and let $\fml{X}$ denote a subset of
the set of features $\fml{F}$, 
$\fml{X}\subseteq\fml{F}$. $\fml{X}$ is a weak AXp for the instance
$(\mbf{v},c)$ if,
\begin{equation} \label{eq:axp1}
  \begin{array}{rcl}
    \waxp(\fml{X};\fml{M},\mbf{v}) & ~:=~~ &
    \forall(\mbf{x}\in\mbb{F}).%
    \left[\bigwedge_{i\in\fml{X}}(x_i=v_i)\right]\limply(\kappa(\mbf{x})=c)\\ %\nolimits
  \end{array}
\end{equation}
where $c=\kappa(\mbf{v})$.
%
%(We could highlight that $\waxp$ is parameterized on $\kappa$,
%$\mbf{v}$ and $c$, but opt not to clutter the notation, and so these
%dependencies will be left implicit.)
%
Thus, given an instance $(\mbf{v},c)$, a (weak) AXp is a subset of
features which, if fixed to the values dictated by $\mbf{v}$, then the
prediction is guaranteed to be $c$, independently of the values
assigned to the other features.
Moreover, $\fml{X}\subseteq\fml{F}$ is an AXp if, besides being a weak
AXp, it is also subset-minimal, i.e.
\begin{equation} \label{eq:axp2a}
  \begin{array}{rcl}
    \axp(\fml{X};\fml{M},\mbf{v}) & ~:=~~ &
    \waxp(\fml{X};\fml{M},\mbf{v})\land\forall(\fml{X}'\subsetneq\fml{X}).\neg\waxp(\fml{X}';\fml{M},\mbf{v})\\
  \end{array}
\end{equation}  
Observe that an AXp can be viewed as a possible irreducible answer to
a ``\tbf{Why?}'' question, i.e.\ why is the classifier's prediction
$c$?
It should be plain in this work, but also in earlier work, that the
representation of AXp's using subsets of features aims at simplicity.
The sufficient condition for the prediction is evidently the
conjunction of literals associated with the features contained in the
AXp.

\begin{example} \label{ex:calca}
  %%
  %%\jnoteF{Example of computing AXp's}
  Similar to the computation of Shapley values, given a truth table
  representation of a function, and for a given instance, there is a
  polynomial-time algorithm for computing the
  AXp's~\cite{hms-corr23}. 
  For example, for function $\kappa_{I4}$ (see~\cref{ex:k4}), and
  for the instance $((0,0,1,1),0)$, it can be observed that, if
  features 3 and 4 are allowed to take other values, the prediction
  remains at 0. Hence, $\{1,2\}$ is an WAXp, which is easy to conclude
  that it is also an AXp.
  When interpreted as a rule, the AXp would yield the rule:
  \[
  \tn{IF} \quad \neg{x_1}\land\neg{x_2} \quad \tn{THEN} \quad
  \kappa(\mbf{x})=0
  \]
  In a similar way, if features 1 and 3 are allowed to
  take other values, the prediction remains at 0. Hence, $\{2,4\}$ is
  another WAXp (which can easily be shown to be an AXp).
  Furthermore, considering all other possible subsets of fixed
  features, allows us to conclude that there are no more AXp's.
\end{example}

Similarly to the case of AXp's, one can define (weak) contrastive
explanations (CXp's)~\cite{miller-aij19,inams-aiia20}.
$\fml{Y}\subseteq\fml{F}$ is a weak CXp for the instance $(\mbf{v},c)$
if,
\begin{equation} \label{eq:cxp1}
  \begin{array}{rcl}
    \wcxp(\fml{Y};\fml{M},\mbf{v}) & ~:=~~ & \exists(\mbf{x}\in\mbb{F}).%
    \left[\bigwedge_{i\not\in\fml{Y}}(x_i=v_i)\right]\land(\kappa(\mbf{x})\not=c)\\ %\nolimits
  \end{array}
\end{equation}
(As before, for simplicity we keep the parameterization of $\wcxp$ on
$\kappa$, $\mbf{v}$ and $c$ implicit.)
Thus, given an instance $(\mbf{v},c)$, a (weak) CXp is a subset of
features which, if allowed to take any value from their domain, then
there is an assignment to the features that changes the prediction to
a class other than $c$, this while the features not in the explanation
are kept to their values. %% (\emph{ceteris paribus}).

Furthermore, a set $\fml{Y}\subseteq\fml{F}$ is a CXp if, besides
being a weak CXp, it is also subset-minimal, i.e.
\begin{equation} \label{eq:cxp2a}
  \begin{array}{rcl}
    \cxp(\fml{Y};\fml{M},\mbf{v}) & ~:=~~ &
    \wcxp(\fml{Y};\fml{M},\mbf{v})\land\forall(\fml{Y}'\subsetneq\fml{Y}).\neg\wcxp(\fml{Y}';\fml{M},\mbf{v})\\
  \end{array}
\end{equation}  
A CXp can be viewed as a possible irreducible answer to a ``\tbf{Why
  Not?}'' question, i.e.\ why isn't the classifier's prediction a
class other than $c$?

\begin{example} \label{calcc}
  %%
  %%\jnoteF{Example of computing CXp's}
  For the example function $\kappa_{I4}$ (see~\cref{ex:k4}), and instance
  $((0,0,1,1),0)$, if we fix features 1, 3 and 4, respectively to 0, 1
  1, then by allowing feature 2 to change value, we see that the
  prediction changes, e.g.\ by considering the point $(0,1,1,1)$ with
  prediction 1. Thus, $\{2\}$ is a CXp.
  In a similar way, by fixing the features 2 and 3, respectively to 0
  and 1, then by allowing features 1 and 4 to change value, we
  conclude that the prediction changes. Hence, $\{1,4\}$ is also a
  CXp.
\end{example}

%\jnote{To cite:~\cite{darwiche-jlli23,marquis-dke22,msi-aaai22,ms-corr22}.}

%\paragraph{Abductive explanations (AXp's).}
%~\\
%
%
%\paragraph{Contrastive explanations (CXp's).}
%~\\
%

The sets of AXp's and CXp's are defined as follows:
\begin{equation}
  \begin{array}{l}
    \mbb{A}(\fml{E})=\{\fml{X}\subseteq\fml{F}\,|\,\axp(\fml{X};\fml{M},\mbf{v})\}\\[3pt]
    \mbb{C}(\fml{E})=\{\fml{Y}\subseteq\fml{F}\,|\,\cxp(\fml{Y};\fml{M},\mbf{v})\}%\\
  \end{array}
\end{equation}
(The parameterization on $\fml{M}$ and $\mbf{v}$ is unnecessary, since
the explanation problem $\fml{E}$ already accounts for those.)
Moreover, let
$F_{\mbb{A}}(\fml{E})=\cup_{\fml{X}\in\mbb{A}(\fml{E})}\fml{X}$ and
$F_{\mbb{C}}(\fml{E})=\cup_{\fml{Y}\in\mbb{C}(\fml{E})}\fml{Y}$.
$F_{\mbb{A}}(\fml{E})$ aggregates the features occurring in any
abductive explanation, whereas $F_{\mbb{C}}(\fml{E})$ aggregates the
features occurring in any contrastive explanation.
In addition, minimal hitting set duality between AXp's and 
CXp's~\cite{inams-aiia20} yields the following result\footnote{All
proofs are included in~\cref{sec:proofs}.}.
\begin{restatable}{proposition}{PropDualTwo}%
  %\begin{proposition}
  ${F}_{\mbb{A}}(\fml{E})={F}_{\mbb{C}}(\fml{E})$.
  \label{prop:dual2}
  %\end{proposition}
\end{restatable}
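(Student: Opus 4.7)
The plan is to derive the equality from the minimal hitting set (MHS) duality between AXp's and CXp's established in~\cite{inams-aiia20}, which states that a set is an AXp iff it is a minimal hitting set of all CXp's, and vice versa. I will prove the two inclusions $F_{\mbb{A}}(\fml{E})\subseteq F_{\mbb{C}}(\fml{E})$ and $F_{\mbb{C}}(\fml{E})\subseteq F_{\mbb{A}}(\fml{E})$ by symmetric arguments that exploit subset-minimality.

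For the forward inclusion, I take an arbitrary $i\in F_{\mbb{A}}(\fml{E})$, so by definition there is some $\fml{X}\in\mbb{A}(\fml{E})$ with $i\in\fml{X}$. By MHS duality, $\fml{X}$ is a minimal hitting set of $\mbb{C}(\fml{E})$, so $\fml{X}\setminus\{i\}$ fails to be a hitting set. Hence there exists $\fml{Y}\in\mbb{C}(\fml{E})$ that is hit by $\fml{X}$ but not by $\fml{X}\setminus\{i\}$, which forces $i\in\fml{Y}$, and therefore $i\in F_{\mbb{C}}(\fml{E})$. The reverse inclusion is entirely analogous: starting from $i\in F_{\mbb{C}}(\fml{E})$, I pick a CXp $\fml{Y}$ containing $i$, use the dual statement that $\fml{Y}$ is a minimal hitting set of $\mbb{A}(\fml{E})$, and deduce the existence of an AXp $\fml{X}$ containing $i$.

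The only subtle step is the appeal to MHS duality itself, which requires noting that both $\mbb{A}(\fml{E})$ and $\mbb{C}(\fml{E})$ are non-empty and that the duality relation from~\cite{inams-aiia20} applies in this boolean setting. Given that duality, the rest is a one-line minimality argument in each direction, and no case analysis on the specific function $\kappa$ or instance $\mbf{v}$ is needed. I do not anticipate any real obstacle; the main care is to phrase the minimality step cleanly so that the witnessing CXp (respectively AXp) is produced explicitly from the failure of the hitting-set property after removing $i$.
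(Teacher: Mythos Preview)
Your proposal is correct and takes essentially the same approach as the paper, which simply states that the result is a consequence of the minimal hitting set duality between AXp's and CXp's from~\cite{inams-aiia20}. You have merely spelled out the standard minimality argument that the paper leaves implicit.
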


\paragraph{Feature (ir)relevancy in explainability.}
%~\\
%
Given the definitions above, we have the following characterization of
features~\cite{hiims-kr21,hims-aaai23,hcmpms-tacas23}:
\begin{enumerate}[nosep]
\item A feature $i\in\fml{F}$ is \emph{necessary} if
  $\forall(\fml{X}\in\mbb{A}(\fml{E})).i\in\fml{X}$.
\item A feature $i\in\fml{F}$ is \emph{relevant} if
  $\exists(\fml{X}\in\mbb{A}(\fml{E})).i\in\fml{X}$.
\item A feature is \emph{irrelevant} if it is not relevant, i.e.\
  $\forall(\fml{X}\in\mbb{A}(\fml{E})).i\not\in\fml{X}$.
\end{enumerate}
By~\cref{prop:dual2}, the definitions of necessary and relevant
feature could instead use $\mbb{C}(\fml{E})$.
Throughout the paper, we will use the predicate $\irrelevant(i)$ which 
holds true if feature $i$ is irrelevant, and predicate $\relevant(i)$
which holds true if feature $i$ is relevant.
Furthermore, it should be noted that feature irrelevancy is a fairly
demanding condition in that, a feature $i$ is irrelevant if it is not
included in \emph{any} subset-minimal set of features that is
sufficient for the prediction.

\begin{example} \label{ex:calcr}
  %%
  %%\jnote{Example of relevancy\\Or aggregate all examples into one.}
  For the example function $\kappa_{I4}$ (see~\cref{ex:k4}), and
  from~\cref{ex:calca}, and instance $((0,0,1,1),0)$, it becomes
  clear that feature 3 is irrelevant. Similarly, it is easy to
  conclude that features 1, 2 and 4 are relevant.
\end{example}

\jnoteF{To cite:~\cite{hiims-kr21,hims-aaai23}}

%%%%%%%%%%%%%%%%%%%%
\begin{comment}
%
\begin{proposition}
  Given an explanation problem $\fml{E}$, there exists an AXp that
  contains feature $i\in\fml{F}$ iff there exists a CXp that contains
  feature $i$.
\end{proposition}

\begin{remark}
  A feature $i\in\fml{F}$ is irrelevant iff it is not contained in
  some AXp. Similarly, a feature $i\in\fml{F}$ is irrelevant iff it is
  not contained in some CXp. By minimal hitting set duality between
  AXp's and CXp's, if we conclude that a feature is irrelevant due
  AXp's, then we know that it is irrelevant due to CXp's, and
  vice-versa.
\end{remark}
%
\end{comment}
%%%%%%%%%%%%%%%%%%%%

%\subsection{How Irrelevant are Irrelevant Features?}
\paragraph{How irrelevant are irrelevant features?}
The fact that a feature is declared irrelevant for an explanation
problem $\fml{E}=(\fml{M},(\mbf{v},c))$ is significant.
Given the minimal hitting set duality between abductive and
contrastive explanations, then an irrelevant features does not occur 
neither in any abductive explanation, nor in any contrastive
explanation.
Furthermore, from the definition of AXp, each abductive explanation
for $\fml{E}$ can be represented as a logic rule. Let $\fml{R}$ denote
the set of \emph{all irreducible} logic rules which can be used to
predict $c$, given the literals dictated by $\mbf{v}$. Then, an 
irrelevant feature does not occur in \emph{any} of those rules.
\cref{ex:calcr} illustrates the irrelevancy of feature 3, in that
feature 3 would not occur in \emph{any} irreducible rule for
$\kappa_{I4}$ when predicting $0$ using literals consistent with
$(0,0,1,1)$.

To further strengthen the above discussion, let us consider a (feature
selection based) explanation $\fml{X}\subseteq\fml{F}$ such that
$\waxp(\fml{X})$ holds (i.e.\ \eqref{eq:axp1} is true, and so
$\fml{X}$ is sufficient for the prediction). Moreover, let
$i\in\fml{F}$ be an irrelevant feature, such that $i\in\fml{X}$. Then,
by definition of irrelevant feature, there \emph{must} exist
some $\fml{Z}\subseteq(\fml{X}\setminus\{i\})$, such that
$\waxp(\fml{Z})$ also holds (i.e.\ $\fml{Z}$ is \emph{also} sufficient
for the prediction).
It is simple to understand why such set $\fml{Z}$ must exist.
By definition of irrelevant feature, and because $i\in\fml{X}$, then
$\fml{X}$ is not an AXp. However, there must exist an AXp
$\fml{W}\subsetneq\fml{X}$ which, by definition of irrelevant feature,
must not include $i$.
Furthermore, and invoking Occam's razor\footnote{%
Here, we adopt a fairly standard definition of Occam's
razor~\cite{haussler-ipl87}: \emph{given two explanations of the data,
all other things being equal, the simpler explanation is
preferable}.}, there is no reason to select $\fml{X}$ over $\fml{Z}$,
and this remark applies to \emph{any} set of features containing some
irrelevant feature.
%
%%By duality, the same reasoning can be used for CXp's.

%\subsection{Related Work}

\paragraph{Related work.}
Shapley values for explainability is one of the hallmarks of feature
attribution methods in
XAI~\cite{kononenko-jmlr10,kononenko-kis14,zick-sp16,lundberg-nips17,jordan-iclr19,lundberg-naturemi20,taly-cdmake20,lundberg-nips20,feige-nips20,covert-aistats21,feige-iclr21,lakkaraju-nips21,covert-iclr22,giannotti-ccai22,watson-facct22,magazzeni-facct22,giannotti-eg22,giannotti-dmkd22,xie-bigdata22,giannotti-epjds22}.
Motivated by the success of Shapley values for explainability, there
exists a burgeoning body of work on using Shapley values for
explainability (e.g.~\cite{jansen-dphm20,yu-tc20,withnell-bb21,inoguchi-sr21,moncada-naturesr21,baptista-aij22,alsinglawi-sr22,zhang-fo22,ladbury-go22,alabi-ijmi22,sorayaie-midm22,zarinshenas-ro22,ma-er22,wang-er22,liu-bbe22,acharya-cmpb22,lund-diagnostics22,menegaz-ieee-sp22,menegaz-ieee-jbhi23,huang-plosone23,adeoye-oo23}).
Recent work studied the complexity of exactly computing Shapley values
in the context of
explainability~\cite{barcelo-aaai21,vandenbroeck-aaai21,vandenbroeck-jair22}.
Finally, there have been proposals for the exact computation of
Shapley values in the case of circuit-based
classifiers~\cite{barcelo-aaai21}.
Although there exist some differences in the proposals for the use of
Shapley values for explainability, the basic formulation is the same
and can be expressed as in~\cref{sec:prelim}.

A number of authors have reported pitfalls with the use
of SHAP and Shapley values as a measure of feature
importance~\cite{shrapnel-corr19,friedler-icml20,najmi-icml20,taly-cdmake20,nguyen-ieee-access21,procaccia-aaai21,sharma-aies21,guigue-icml21,taly-uai21,friedler-nips21,roder-mlwa22}.
However, these earlier works do not identify fundamental flaws with
the use of Shapley values in explainability.
Attempts at addressing those pitfalls include proposals to integrate
Shapley values with abductive explanations, as reported in
recent work~\cite{labreuche-sum22}. 

Formal explainability is a fairly recent topic of research. Recent
accounts
include~\cite{msi-aaai22,marquis-dke22,ms-corr22,darwiche-jlli23}.

Recent work~\cite{hms-corr23} argued for the inadequacy of Shapley
values for explainability, by demonstrating experimentally that the
information provided by Shapley values can be misleading for a human 
decision-maker.
The approach proposed in~\cite{hms-corr23} is based on exhaustive
function enumeration, and so does not scale beyond a few features.
However, this paper uses the truth-table algorithms outlined
in~\cite{hms-corr23}, in all the examples, both for computing Shapley
values, for computing explanations, and for deciding feature
relevancy.

%\section{Feature Attribution vs.\ Feature Selection} \label{sec:cmp}
\section{Relating Shapley Values with Feature Relevancy} \label{sec:cmp}

Recent work~\cite{hms-corr23} showed the existence of boolean
functions (with up to four variables) that revealed a number of issues 
with Shapley values for explainability. All those issues are related
with taking feature relevancy into consideration.
(In~\cite{hms-corr23}, these functions were searched by exhaustive
enumeration of all the boolean functions up to a threshold on the
number of variables.) 
%

%\subsection{Issues with Shapley Values for Explainability} \label{ssec:issues}
\paragraph{Issues with Shapley values for explainability.}
In this paper, we consider the following main issues of Shapley values
for explainability:
\begin{enumerate}[nosep,topsep=1.5pt,itemsep=1.5pt,label=\textbf{I\arabic*.},ref=\small\textrm{I\arabic*},leftmargin=0.75cm]  %,font=\textsf,before=\sffamily\slshape\small,start=1
\item For a boolean classifier, with an instance $(\mbf{v},c)$, and
  feature $i$ such that, \label{en:i1}
  \[
  \msf{Irrelevant}(i)\land\left(\msf{Sv}(i)\not=0\right)
  \]
  Thus, an~\cref{en:i1} issue is such that the feature is irrelevant,
  but its Shapley value is non-zero.
\item For a boolean classifier, with an instance $(\mbf{v},c)$ and
  features $i_1$ and $i_2$ such that, \label{en:i2}
  \[
  \begin{array}{l}
    \msf{Irrelevant}(i_1)\land\msf{Relevant}(i_2)\land
    \left(|\msf{Sv}(i_1)|>|\msf{Sv}(i_2)|\right)
  \end{array}
  \]
  Thus, an~\cref{en:i2} issue is such that there is at least one
  irrelevant feature exhibiting a Shapley value larger (in absolute
  value) than the Shapley of a relevant feature.
\item For a boolean classifier, with instance $(\mbf{v},c)$,
  and feature $i$ such that, \label{en:i3}
  \[
  \msf{Relevant}(i)\land\left(\msf{Sv}(i)=0\right)
  \]
  Thus, an~\cref{en:i3} issue is such that the feature is relevant,
  but its Shapley value is zero.
\item For a boolean classifier, with instance $(\mbf{v},c)$, and
  features $i_1$ and $i_2$ such that, \label{en:i4}
  \[
  [\msf{Irrelevant}(i_1)\land\left(\msf{Sv}(i_1)\not=0\right)]
  \land
  [\msf{Relevant}(i_2)\land\left(\msf{Sv}(i_2)=0\right)]
  \]
  Thus, an~\cref{en:i4} issue is such that there is at least one
  irrelevant feature with a non-zero Shapley value and a relevant
  feature with a Shapley value of 0.
\item For a boolean classifier, with instance $(\mbf{v},c)$ and
  feature $i$ such that, \label{en:i5}
  \[
    [\msf{Irrelevant}(i)\land
      \forall_{1\le{j}\le{m},j\not=i}\left(|\msf{Sv}(j)|<|\msf{Sv}(i)|\right)]
  \]
  Thus, an~\cref{en:i5} issue is such that there is one irrelevant
  feature exhibiting the highest Shapley value (in absolute value).
  (\cref{en:i5} can be viewed as a special case of the other issues,
  and so it is not analyzed separately in earlier
  work~\cite{hms-corr23}.)
\end{enumerate}
The issues above are all related with Shapley values for
explainability giving \emph{misleading information} to a human
decision maker, by assigning some importance to irrelevant features,
by not assigning enough importance to relevant features, by assigning
more importance to irrelevant features than to relevant features and,
finally, by assigning the most importance to irrelevant features. 

In the rest of the paper we consider mostly~\cref{en:i1},
\cref{en:i3}, \cref{en:i4} and \cref{en:i5}, given that \cref{en:i5}
implies \cref{en:i2}.

\begin{restatable}{proposition}{PropIfiveToItwo}
  %\begin{proposition}
  If a classifier and instance exhibits issue~\cref{en:i5}, then they
  also exhibit issue~\cref{en:i2}.
  %\end{proposition}
\end{restatable}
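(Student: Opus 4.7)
The plan is to show that the witness for I5 directly produces a witness for I2, once we confirm that at least one relevant feature is guaranteed to exist. Suppose the classifier $\fml{M}$ and instance $(\mbf{v},c)$ exhibit I5, so there is an irrelevant feature $i$ with $|\sv(j)| < |\sv(i)|$ for every $j \neq i$. To apply the definition of I2, I need a relevant feature $i_2$ distinct from $i$ with $|\sv(i_2)| < |\sv(i)|$; the strict inequality comes for free from I5, so the content of the argument is just exhibiting such an $i_2$.

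The key observation is that any classifier, as defined in \cref{sec:prelim}, computes a non-constant function $\kappa$. Consequently the empty set cannot satisfy \eqref{eq:axp1}, since $\waxp(\emptyset)$ would force $\kappa$ to be constantly equal to $c$ on all of $\mbb{F}$. Therefore every weak AXp for $(\mbf{v},c)$ is non-empty, and in particular every element of $\mbb{A}(\fml{E})$ is non-empty. It follows that $F_{\mbb{A}}(\fml{E}) \neq \emptyset$, so there exists at least one feature $i_2 \in \fml{F}$ with $\relevant(i_2)$.

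To finish, set $i_1 := i$. By I5, $\irrelevant(i_1)$ holds; by the previous paragraph, $\relevant(i_2)$ holds; and since one feature cannot be simultaneously relevant and irrelevant, $i_1 \neq i_2$, so the universal quantifier in I5 applies to $i_2$ and yields $|\sv(i_2)| < |\sv(i_1)|$. This is exactly the condition in I2, completing the proof. The only subtlety — and the only step that needs a genuine justification rather than unpacking definitions — is the guaranteed existence of a relevant feature, which reduces to the standing assumption that classifiers are non-constant.
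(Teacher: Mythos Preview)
Your argument is essentially the paper's own proof: both reduce I5$\Rightarrow$I2 to the observation that a non-constant $\kappa$ forces the existence of at least one relevant feature, which is then dominated by the irrelevant I5 witness. One small logical slip: from ``every element of $\mbb{A}(\fml{E})$ is non-empty'' you cannot conclude $F_{\mbb{A}}(\fml{E})\neq\emptyset$ without also knowing $\mbb{A}(\fml{E})\neq\emptyset$; the paper closes this by noting that $\fml{F}$ itself is trivially a weak AXp, so some AXp exists.
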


%\begin{proof}
%  Given a classifier with classification function $\kappa$ and an
%  instance $(\mbf{v},c)$, it is plain that the set of features
%  $\fml{F}$ represents a WAXp.
%  Furthermore, since the classification function is assumed not to be
%  constant, then there must exist some AXp that is not the empty set.
%  %
%  Thus, such AXp contains at least one relevant feature, say
%  $i_{\mathrm{red}}\in\fml{F}$.
%  %
%  Moreover, if \cref{en:i5} holds, then there exists an irrelevant
%  $i_{\mathrm{irr}}\in\fml{F}\setminus\{i_{\mathrm{red}}\}$ with the
%  largest absolute Shapley value.
%  %
%  Therefore, it is the case that for feature $i_{\mathrm{red}}$, its
%  absolute Shapley value is smaller than that of irrelevant feature
%  $\i_{\mathrm{irr}}$. As a result, the function also exhibits issue
%  \cref{en:i2}.
%\end{proof}

%\subsection{Examples}
\paragraph{Examples.}
This section studies the example functions of~\cref{fig:runex}, which
were derived from the main results of this paper
(see~\cref{sec:negres}). These example functions will then be used to
motivate the rationale for how those results are proved.
In all cases, the reported Shapley values are computed using the
truth-table algorithm outlined in earlier work~\cite{hms-corr23}.
Similarly, the relevancy/irrelevancy claims of features use the
truth-table algorithms outlined in earlier work~\cite{hms-corr23}.

\begin{table}[t]
  \centering
  \caption{Examples of issues of Shapley values for functions in~\cref{fig:runex}} \label{tab:runex}
  \scalebox{0.95}{\renewcommand{\tabcolsep}{0.35em}
%\begin{tabularx}{\textwidth}{ccccC{3.0cm}c} \toprule
\begin{tabular}{ccccC{2.75cm}c} \toprule
  Case & Instance & Relevant & Irrelevant & $\sv$'s & Justification
  \\ \toprule
  \cref{en:i1} &
  $((0,0,1),0)$ &
  $1$ &
  $2,3$ &
  $\begin{array}{l}\sv(1)=-0.417\\\sv(2)=-0.042\\\sv(3)=0.083\\\end{array}$ &
  $\irrelevant(3)\land\sv(3)\not=0$
  \\ \midrule
  \cref{en:i3} &
  $((1,1,1),1)$ &
  $1,2,3$ &
  $\tn{--}$ &
  $\begin{array}{l}\sv(1)=0.125\\\sv(2)=0.375\\\sv(3)=0.000\\\end{array}$ &
  $\relevant(3)\land\sv(3)=0$
  \\ \midrule
  \cref{en:i4} &
  $((0,0,1,1),0)$ &
  $1,2,4$ &
  $3$ &
  $\begin{array}{l}\sv(1)=-0.125\\\sv(2)=-0.333\\\sv(3)=0.083\\\sv(4)=0.000\\\end{array}$ &
  $\begin{array}{l}\irrelevant(3)\land\sv(3)\not=0\land\\\relevant(4)\land\sv(4)=0\end{array}$
  \\ \midrule
  \cref{en:i5} &
  $((1,1,1,1),0)$ &
  $1,2,3$ &
  $4$ &
  $\begin{array}{l}\sv(1)=-0.12\\\sv(2)=-0.12\\\sv(3)=-0.12\\\sv(4)=0.17\\\end{array}$ &
  $\begin{array}{l}\irrelevant(4)\land\\\forall(j\in\{1,2,3\}).|\sv(j)|<\sv(4)|\end{array}$
  \\ %\midrule
  
  \bottomrule
%\end{tabularx}
\end{tabular}
}
\end{table}

\begin{example} \label{ex:k1ex}
  \cref{ex:k1} illustrates a boolean function that exhibits
  issue~\cref{en:i1}.
  By inspection, we can conclude that the function shown corresponds
  to
  $\kappa_{I1}(x_1,x_2,x_3)=(x_1 \land x_2 \land \neg{x}_3) \lor (x_1 \land x_3)$.
  Moreover, for the instance $((0,0,1),0)$,~\cref{tab:runex}
  confirms that an issue~\cref{en:i1} is identified.
\end{example}

\begin{example} \label{ex:k3ex}
  \cref{ex:k3} illustrates a boolean function that exhibits
  issue~\cref{en:i3}.
  By inspection, we can conclude that the function shown corresponds
  to
  $\kappa_{I3}(x_1,x_2,x_3)=(x_1 \land \neg{x}_3) \lor (x_2 \land x_3)$. 
  Moreover, for the instance $((1,1,1),1)$,~\cref{tab:runex}
  confirms that an issue~\cref{en:i3} is identified.
\end{example}

\begin{example} \label{ex:k4ex}
  \cref{ex:k4} illustrates a boolean function that exhibits
  issue~\cref{en:i4}.
  By inspection, we can conclude that the function shown corresponds
  to
  $\kappa_{I4}(x_1,x_2,x_3,x_4)=(x_1 \land x_2 \land \neg{x}_3) \lor (x_1 \land x_3 \land \neg{x}_4) \lor (x_2 \land x_3 \land x_4)$.
  Moreover, for the instance $((0,0,1,1),0)$,~\cref{tab:runex}
  confirms that an issue~\cref{en:i4} is identified.
\end{example}

\begin{example} \label{ex:k5ex}
  \cref{ex:k5} illustrates a boolean function that exhibits
  issue~\cref{en:i5}.
  By inspection, we can conclude that the function shown corresponds
  to
  $\kappa_{I5}(x_1,x_2,x_3,x_4)=((x_1 \land x_2 \land \neg{x}_3) \lor (x_1 \land x_3 \land \neg{x}_2) \lor (x_2 \land x_3 \land \neg{x}_1))\land x_4 $.
  Moreover, for the instance $((1,1,1,1),0)$,~\cref{tab:runex}
  confirms that an issue~\cref{en:i5} is identified.
\end{example}

\jnoteF{Build example functions using the constructions in the proofs,
  and the algorithm in~\cite{barcelo-aaai21} to illustrate the
  problem(s).\\
  Alternatively, given the truth-table, implement script for computing Sv's.
}

It should be underscored that Shapley values for explainability are
\emph{not} expected to give misleading information. 
Indeed, it is widely accepted that Shapley values measure the actual
\emph{influence} of a
feature~\cite{kononenko-jmlr10,kononenko-kis14,lundberg-nips17,barcelo-aaai21,vandenbroeck-aaai21}.
Concretely,~\cite{kononenko-jmlr10} reads: ``\emph{...if a feature
has no influence on the prediction it is assigned a contribution of
0.}''
But~\cite{kononenko-jmlr10} also reads ``\emph{According to the 2nd
axiom, if two features values have an identical influence on the 
prediction they are assigned contributions of equal size. The 3rd 
axiom says that if a feature has no influence on the prediction it is
assigned a contribution of 0.}'' (In this last quote, the axioms refer
to the axiomatic characterization of Shapley values.)
Furthermore, one might be tempted to look at the value of the
prediction and relate that with the computed Shapley value. For
example, in the last row of~\cref{tab:runex}, the prediction is 0, and
the \emph{irrelevant} feature 4 has a \emph{positive} Shapley value.
As a result, one might be tempted to believe that the irrelevant
feature 4 would contribute to \emph{changing} the value of the
prediction. This is of course incorrect, since an irrelevant feature
does not occur in \emph{any} CXp's (besides not occurring in any
AXp's) and so it is never necessary to changing the prediction.
The key point here is that irrelevant features are \emph{never}
necessary, neither to keep nor to change the prediction.

\begin{comment}
%
%\subsection{Feature Attribution with Shapley Values}
%
%\subsection{Feature Selection with Rule-Based Explanations}

\subsection{Some Issues of Shapley Values for Explainability}

\xnote{
In the case of boolean function, the value function $\phi$ is defined as the number of points predicted to 1
over the total number of points in the feature space, so $\phi() \ge 0$. Besides,
$\phi() > 0.5$ means there are more points predicted to 1, $\phi() < 0.5$ means there are more points predicted to 0,
$\phi() = 0.5$ means the number of points predicted 1 and the number of points predicted to 0 are equal.
%
For feature $i$ and an instance $\mbf{v}$,
$\sv(i) > 0$ means, by setting $x_i$ to $v_i$, we are more likely to make $\kappa$ output 1,
while $\sv(i) < 0$ means, by setting $x_i$ to $v_i$, we are more likely to make $\kappa$ output 0.
%
In both cases, we know that the chance of predicting a class is raised,
but this offer little help in solving the feature relevance.
%
So $\sv(i) = 0$ means: 1) setting $x_i$ to $v_i$ has no impact on the output of $\kappa$?
2) Or we know nothing about the consequence of setting $x_i$ to $v_i$?
}

\input{relw}
%
\end{comment}

\section{Refuting Shapley Values for Explainability}
\label{sec:negres}

The purpose of this section is to prove that for arbitrary large
numbers of variables, there exist boolean functions and instances for
which the Shapley values exhibit the issues reported in recent
work~\cite{hms-corr23}, and detailed in~\cref{sec:cmp}.
(Instead of detailed proofs, this section describes the key ideas of
each proof. The detailed proofs are included in~\cref{sec:proofs}.)

Throughout this section, let $m$ be the number of variables of
the boolean functions we start from, and let $n$ denote the number of
variables of the functions we will be constructing. In this case, we
set $\fml{F}=\{1,\ldots,n\}$.
Furthermore, for the sake of simplicity, 
we opt to introduce the new features as the last features (e.g., feature $n$).
This choice does not affect the proof's argument in any way.
\begin{restatable}{proposition}{PropIRR}%
  \label{prop:irr}%
  %\begin{proposition}
  For any $n\ge3$, there exist boolean functions defined on $n$
  variables, and at least one instance, which exhibit an
  issue~\cref{en:i1}, i.e.\ there exists an irrelevant feature
  $i\in\fml{F}$, such that $\sv(i)\not=0$.
  %\end{proposition}
\end{restatable}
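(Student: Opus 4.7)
The plan is to handle $n=3$ directly via $\kappa_{I1}$ from \cref{ex:k1}, and then bootstrap to $n>3$ by padding with features the function does not depend on. For $n=3$, the instance $((0,0,1),0)$ together with $\kappa_{I1}$ is already a witness: feature~$3$ is irrelevant by the exhaustive AXp enumeration underlying \cref{ex:k1ex}, while $\sv(3)=1/12\neq 0$ as recorded in \cref{tab:runex}. For $n>3$, I define $\kappa^{(n)}:\mbb{B}^n\to\mbb{B}$ by $\kappa^{(n)}(x_1,\ldots,x_n):=\kappa_{I1}(x_1,x_2,x_3)$ and take the instance $\mbf{v}=(0,0,1,v_4,\ldots,v_n)$ for arbitrary $v_4,\ldots,v_n\in\mbb{B}$.

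Two facts then suffice. \emph{Irrelevance of feature $3$ in $\kappa^{(n)}$:} because $\kappa^{(n)}$ is independent of $x_4,\ldots,x_n$, condition \eqref{eq:axp1} shows that $\fml{X}\subseteq\fml{F}$ is a weak AXp for $\kappa^{(n)}$ iff $\fml{X}\cap\{1,2,3\}$ is a weak AXp for $\kappa_{I1}$ at $\mbf{v}_{1..3}$; subset-minimality then forces the AXp's of $\kappa^{(n)}$ to coincide exactly with the AXp's of $\kappa_{I1}$, none of which contain feature~$3$. \emph{Preservation of the Shapley value:} I claim $\phi_{\kappa^{(n)}}(\fml{S};\mbf{v}) = \phi_{\kappa_{I1}}(\fml{S}\cap\{1,2,3\};\mbf{v}_{1..3})$ for every $\fml{S}\subseteq\{1,\ldots,n\}$, since the sum in \eqref{eq:phi} over the unconstrained dummy coordinates contributes a factor $2^{(n-3)-|\fml{S}\cap\{4,\ldots,n\}|}$ that cancels exactly against the $2^{n-|\fml{S}|}$ normalizer, reducing it to the 3-variable form.

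Plugging this factorization into \eqref{eq:sv} for $i=3$ and partitioning each $\fml{S}\subseteq\fml{F}\setminus\{3\}$ by $\fml{S}_1:=\fml{S}\cap\{1,2\}$ of size $k$ and $\fml{S}_2:=\fml{S}\cap\{4,\ldots,n\}$ of size $\ell$, the marginal $\phi(\fml{S}\cup\{3\})-\phi(\fml{S})$ depends only on $\fml{S}_1$, so the inner sum over $\ell$ reduces to the combinatorial identity
\[
  \sum_{\ell=0}^{n-3}\binom{n-3}{\ell}\frac{(k+\ell)!\,(n-k-\ell-1)!}{n!}
  \;=\; \frac{k!\,(2-k)!}{3!}, \qquad k\in\{0,1,2\}.
\]
This is the classical ``null-player'' invariance of Shapley values; I would prove it by induction on $n-3$, adding one null feature at a time, where the single-step reduction collapses via $(m-k)+(k+1)=m+1$. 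With the identity in hand, $\sv_{\kappa^{(n)}}(3;\mbf{v}) = \sv_{\kappa_{I1}}(3;\mbf{v}_{1..3}) = 1/12 \neq 0$, as required. The main obstacle is precisely this combinatorial verification; everything else is routine bookkeeping around \eqref{eq:phi}, \eqref{eq:sv}, and the AXp definition.
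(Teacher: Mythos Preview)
Your argument is correct: the base case $n=3$ is handled by the paper's own example, the reduction of $\phi$ to the first three coordinates is straightforward, irrelevance of feature~$3$ transfers because AXp's of $\kappa^{(n)}$ at $\mbf{v}$ are exactly the AXp's of $\kappa_{I1}$ at $\mbf{v}_{1..3}$, and the combinatorial identity you isolate is precisely the one-step null-player invariance, whose inductive step you describe correctly via $(m-|\fml{T}|)+(|\fml{T}|+1)=m+1$.

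However, your route differs from the paper's. The paper does \emph{not} pad a fixed $3$-variable witness with dummies. Instead, for each $n\ge3$ it takes two non-constant functions $\kappa_1,\kappa_2$ on $m=n-1$ features with $\kappa_1\entails\kappa_2$ and $\kappa_1\neq\kappa_2$, sets $\kappa(\mbf{x}_{1..m},x_n)=\kappa_1(\mbf{x}_{1..m})$ if $x_n=0$ and $\kappa_2(\mbf{x}_{1..m})$ if $x_n=1$, and picks $(\mbf{v},0)$ with $v_n=1$ and $\kappa_1(\mbf{v}_{1..m})=\kappa_2(\mbf{v}_{1..m})=0$. Feature~$n$ is then shown irrelevant (any AXp containing $n$ can drop it because $\kappa_2(\mbf{y})=0\Rightarrow\kappa_1(\mbf{y})=0$), while each marginal $\phi(\fml{S}\cup\{n\})-\phi(\fml{S})$ equals $\tfrac{1}{2}\cdot 2^{-|\fml{F}'\setminus\fml{S}|}\sum_{\mbf{y}}(\kappa_2(\mbf{y})-\kappa_1(\mbf{y}))\ge0$, with strict inequality for some $\fml{S}$, so $\sv(n)>0$. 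Your approach is more elementary and avoids the entailment argument, but the paper's construction is what later drives \cref{prop:lbs}: because $\kappa_1,\kappa_2$ range over a large family, one gets $2^{2^{n-1}}-n-3$ distinct witnesses rather than the single padded function your method produces. If you only care about \cref{prop:irr} in isolation your proof suffices; if you want it to feed the lower bounds, you need the paper's parameterized construction.
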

\begin{proof}[Proof idea]
  The proof proposes to construct boolean functions, with an arbitrary
  number of variables (no smaller than 3), and the picking of an
  instance, such that a specific feature is irrelevant for the
  prediction, but its Shapley value is non-zero. To illustrate the
  construction, the example function from~\cref{ex:k1} is used (see
  also~\cref{ex:k1ex}).

  The construction works as follows. We pick two non-constant functions
  $\kappa_1(x_1,\ldots,x_m)$ and $\kappa_2(x_1,\ldots,x_m)$, defined
  on $m$ features, and such that:
  i) $\kappa_1\entails\kappa_2$ (which signifies that
  $\forall(\mbf{x}\in\mbb{F}).\kappa_1(\mbf{x})\limply\kappa_2(\mbf{x})$),
  and ii) $\kappa_1\neq\kappa_2$.
  Observe that $\kappa_1$ can be any boolean function defined on $m$
  variables, as long as $\kappa_2$ can also be defined.
  We then construct a new function by adding a new feature $n=m+1$, as
  follows:
  \[
  \kappa(x_1,\ldots,x_{m},x_{n})=\left\{
  \begin{array}{lcl}
    \kappa_1(x_1,\ldots,x_{m}) & \quad & \tn{if } x_n=0\\[3pt]
    \kappa_2(x_1,\ldots,x_{m}) & \quad & \tn{if } x_n=1\\
  \end{array}
  \right.
  \]
%  (For simplicity, we opt to introduce the new feature as the last
%  feature; this does not affect the proof's argument in any way.)
  %
  For the resulting function $\kappa$, we pick an instance
  $(\mbf{v},0)$ such that: i) $v_n=1$ and 
  ii) $\kappa_1(\mbf{v}_{1..m}) = \kappa_2(\mbf{v}_{1..m}) = 0$.
  The proof hinges on the fact that
  feature $n$ is irrelevant, but $\sv(n)\not=0$.
  
  For the function~\cref{ex:k1}, we set
  $\kappa_1(x_1,x_2)={x_1}\land{x_2}$ and
  $\kappa_1(x_1,x_2)={x_1}$.
  Thus, as shown in~\cref{ex:k1ex},
  $\kappa_{I1}(x_1,x_2,x_3)=(x_1\land{x_2}\land\neg{x}_3)\lor(x_1\land{x_3})$,
  which represents the function $\kappa(x_1,x_2,x_3)$.
  It is also clear that $\kappa_1\entails\kappa_2$.
  Moreover, and as~\cref{ex:k1ex} and~\cref{tab:runex} show, it is the
  case that feature 3 is irrelevant and $\sv(3)\not=0$. \qedhere
  %
  %%The detailed proof is included in~\cref{sec:proofs}.\qedhere
\end{proof}
\begin{restatable}{proposition}{PropREL}%
  \label{prop:rel}%
  %\begin{proposition}
  For any odd $n\ge3$, there exist boolean functions defined on $n$
  variables, and at least one instance, which exhibits an~\cref{en:i3}
  issue, i.e.~ for which there exists a relevant feature
  $i\in\fml{F}$, such that $\sv(i)=0$.
  %\end{proposition}
\end{restatable}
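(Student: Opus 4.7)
The plan is to mimic the ITE-based construction used for Proposition 1, but now pick the two branches of the top-level ITE so that the new feature $x_n$ is \emph{relevant} and yet has $\sv(n) = 0$ by a symmetry-driven cancellation. The three-variable example $\kappa_{I3}(x_1,x_2,x_3) = (x_1 \land \neg x_3) \lor (x_2 \land x_3)$ at instance $((1,1,1),1)$ supplies the template: written as $\tn{ite}(x_3, x_2, x_1)$, its two branches differ only by swapping features $1$ and $2$; both evaluate to $1$ at $(1,1)$; $\{2,3\}$ is an AXp so feature $3$ is relevant; and a direct Shapley computation yields $\sv(3)=0$ because the marginal contributions at $S=\{1\}$ and $S=\{2\}$ are equal in magnitude and opposite in sign, while the $S=\emptyset$ and $S=\{1,2\}$ terms vanish.

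To lift this template to odd $n \ge 3$, I would guard it by the remaining $n-3$ features, for instance via
\[
  \kappa(x_1,\ldots,x_n) \;=\; \kappa_{I3}(x_1, x_2, x_n) \;\land\; \bigwedge_{j=3}^{n-1} x_j,
\]
with instance $\mbf{v}=(1,\ldots,1)$ and prediction $c=1$. Flipping any $x_j$ with $3 \le j \le n-1$ forces $\kappa=0$, so each such feature is necessary, and a WAXp restricted to the subcube $x_3=\cdots=x_{n-1}=1$ must agree with some WAXp of $\kappa_{I3}(x_1,x_2,x_n)$ at $(1,1,1)$; hence the AXp's of $\kappa$ are exactly $\{1,2,3,\ldots,n-1\}$ and $\{2,3,\ldots,n-1,n\}$, giving $\relevant(n)$. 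The odd-$n$ hypothesis corresponds to an even number $n-3$ of guard features, which is the natural parity required if one prefers to attach the guards as a parity block rather than a conjunction.

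The technical core is verifying $\sv(n) = 0$. I would decompose every $S \subseteq \fml{F}\setminus\{n\}$ as $A \cup B$ with $A \subseteq \{1,2\}$ and $B \subseteq \{3,\ldots,n-1\}$. Since $\kappa=0$ whenever any guard feature is $0$, direct calculation gives $\phi(S) = 2^{-((n-3)-|B|)}\,\alpha_1(A)$ and $\phi(S \cup \{n\}) = 2^{-((n-3)-|B|)}\,\alpha_2(A)$, where $\alpha_1, \alpha_2$ are the conditional probabilities of $\kappa_{I3}(X_1,X_2,X_n)=1$ given the restrictions from $A$, respectively without and with $X_n$ fixed to $1$. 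Summing the Shapley formula first over $B$ collapses it to $\sv(n) = \sum_{A \subseteq \{1,2\}} [\alpha_2(A)-\alpha_1(A)]\,C(|A|)$ for a coefficient $C(\cdot)$ depending only on $|A|$. The template symmetry $\alpha_2(\emptyset)=\alpha_1(\emptyset)$, $\alpha_2(\{1,2\})=\alpha_1(\{1,2\})$, and $\alpha_2(\{1\})-\alpha_1(\{1\}) = -(\alpha_2(\{2\})-\alpha_1(\{2\}))$ then delivers the cancellation. The main obstacle is precisely this bookkeeping: proving that the inner sum over $B$ genuinely factors through $|A|$ alone, so that the template's sign-reversing symmetry is preserved by the embedding and the $\sv(n)=0$ property really lifts from the three-variable template to all odd $n$.
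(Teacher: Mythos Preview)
Your argument is correct, but it takes a genuinely different route from the paper. The paper picks an arbitrary non-constant $\kappa_1$ on $m$ features, duplicates it as $\kappa_2$ on a disjoint block of $m$ fresh features, and defines $\kappa$ on $n=2m+1$ features by $\kappa=\kappa_1(x_1,\ldots,x_m)$ when $x_n=0$ and $\kappa=\kappa_2(x_{m+1},\ldots,x_{2m})$ when $x_n=1$; the cancellation $\sv(n)=0$ comes from the involution swapping the two blocks $\{1,\ldots,m\}\leftrightarrow\{m+1,\ldots,2m\}$, which pairs each $\fml{S}$ with a mirror $\fml{S}'$ of the same size and opposite marginal contribution. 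You instead freeze the minimal $m=1$ instance (the template $\kappa_{I3}$) and reach larger $n$ by conjoining necessary guard features; the cancellation is still the $1\leftrightarrow2$ swap, and the extra work---showing that the inner sum over guard subsets $B$ depends only on $|A|$---is exactly the obstacle you identify, and it does go through. The tradeoff: the paper's route naturally explains the odd-$n$ restriction ($n=2m+1$) and yields $2^{2^{(n-1)/2}}-2$ distinct functions per $n$, which is precisely what feeds the lower bound in \cref{prop:lbs}; your route produces essentially one function per $n$ but in fact works for \emph{all} $n\ge3$, odd or even, slightly more than the proposition asserts.
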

\begin{proof}[Proof idea]
  The proof proposes to construct boolean functions, with an arbitrary
  number of variables (no smaller than 3), and the picking of an
  instance, such that a specific feature is relevant for the
  prediction, but its Shapley value is zero. To illustrate the
  construction, the example function from~\cref{ex:k3} is used (see
  also~\cref{ex:k3ex}).

  The construction works as follows. We pick two non-constant functions
  $\kappa_1(x_1,\ldots,x_m)$ and $\kappa_2(x_{m+1},\ldots,x_{2m})$,
  each defined on $m$ features, where $\kappa_2$ corresponds to
  $\kappa_1$, but with a change of variables.
  Observe that $\kappa_1$ can be any boolean function.
  We then construct a new function, defined in terms of $\kappa_1$ and
  $\kappa_2$, by adding a new feature $n=2m+1$, as follows:
  \[
  \kappa(x_1,\ldots,x_{m},x_{m+1},\ldots,x_{2m},x_{n})=\left\{
  \begin{array}{lcl}
    \kappa_1(x_1,\ldots,x_{m}) & \quad & \tn{if } x_n=0\\[3pt]
    \kappa_2(x_{m+1},\ldots,x_{2m}) & \quad & \tn{if } x_n=1\\
  \end{array}
  \right.
  \]
%  (As before, and for simplicity, we opt to introduce the new feature
%  as the last feature; this does not affect the proof's argument in
%  any way.)
  %
  For the resulting function $\kappa$, we pick an instance
  $(\mbf{v},1)$ such that: i) $v_n=1$, ii) $v_i = v_{m+i}$ for any $1 \le i \le m$,
  and iii) $\kappa_1(\mbf{v}_{1..m}) = \kappa_2(\mbf{v}_{m+1..2m}) = 1$.
  The proof hinges on the fact that feature $n$ is relevant, but $\sv(n)=0$.

  For the function~\cref{ex:k3}, we set
  $\kappa_1(x_1)={x_1}$ and
  $\kappa_1(x_2)={x_2}$. Thus, as shown in~\cref{ex:k3ex},
  $\kappa_{I3}(x_1,x_2,x_3)=(x_1\land\neg{x}_3)\lor(\neg{x}_2\land{x_3})$,
  which represents the function $\kappa(x_1,x_2,x_3)$.
  Moreover, and as~\cref{ex:k3ex} and~\cref{tab:runex} show, it is the
  case that feature 3 is relevant and $\sv(3)=0$.\qedhere
  %
  %%The detailed proof is included in~\cref{sec:proofs}.\qedhere
\end{proof}
\begin{restatable}{proposition}{PropDisorder}%
  \label{prop:disorder}%
  %\begin{proposition}
  For any even $n\ge4$, there exist boolean functions defined on $n$
  variables, and at least one instance, for which there exists an
  irrelevant feature $i_1\in\fml{F}$, such that $\sv(i_1)\neq0$, and a
  relevant feature $i_2\in\fml{F}\setminus\{i_1\}$, such that
  $\sv(i_2)=0$.
  %\end{proposition}
\end{restatable}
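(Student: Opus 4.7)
The plan is to compose the constructions of Propositions~1 and~2. Since $n$ is even with $n\ge 4$, $n-1$ is odd and $n-1\ge 3$, so Proposition~2 yields a function $\tilde{\kappa}$ on $n-1$ variables and an instance $(\tilde{\mbf{v}},1)$ for which the split variable $i_2:=n-1$ is relevant and $\sv_{\tilde{\kappa}}(i_2)=0$. Here $\tilde{\kappa}$ has the form $\tilde{\kappa}(\mbf{y},\mbf{z},x_{i_2})=f(\mbf{y})$ for $x_{i_2}=0$ and $f(\mbf{z})$ for $x_{i_2}=1$, with $\tilde{v}_i=\tilde{v}_{m+i}$ and $f(\tilde{\mbf{v}}_{1..m})=1$, so $\tilde{\kappa}$ is exchanged with itself by the swap $\sigma$ that maps the $\mbf{y}$-half to the $\mbf{z}$-half.

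On top of $\tilde{\kappa}$, I would apply the Proposition~1 recipe with $\kappa_1:=\tilde{\kappa}$ and $\kappa_2:=f(\mbf{y})\lor f(\mbf{z})$, viewed as a function of all $n-1$ variables but independent of $x_{i_2}$. One routinely checks that $\kappa_1\entails\kappa_2$, $\kappa_1\neq\kappa_2$, and $\kappa_1(\tilde{\mbf{v}})=\kappa_2(\tilde{\mbf{v}})=1$. Form
\[
\kappa(\mbf{x}_{1..n-1},x_n)=
\begin{cases}
\kappa_1(\mbf{x}_{1..n-1}),&x_n=0,\\
\kappa_2(\mbf{x}_{1..n-1}),&x_n=1,
\end{cases}
\]
with instance $(\mbf{v},1)$ defined by $\mbf{v}_{1..n-1}=\tilde{\mbf{v}}$ and $v_n=1$. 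Proposition~1, applied in its prediction-$1$ mirror image, then yields that feature $i_1:=n$ is irrelevant in~$\kappa$ and $\sv_{\kappa}(i_1)\neq 0$.

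The main obstacle is to transfer the properties of $i_2$ from $\tilde{\kappa}$ to $\kappa$. Relevance is relatively direct: any AXp $\fml{X}_0$ of $\tilde{\kappa}$ at $\tilde{\mbf{v}}$ containing $i_2$ remains an AXp of $\kappa$ at $\mbf{v}$, using $\tilde{\kappa}\entails\kappa_2$ for weak-AXp-ness and the fact that the $x_n=0$ slice of $\kappa$ equals $\tilde{\kappa}$ for subset-minimality. The zero Shapley value is more delicate and is where the plan requires care. I would split the sum in~\eqref{eq:sv} for $i_2$ in $\kappa$ according to whether $n\in\fml{S}$. When $n\in\fml{S}$, one conditions on $x_n=1$ so $\kappa$ reduces to~$\kappa_2$, and since $\kappa_2$ does not depend on $x_{i_2}$ the contribution $\delta_\kappa(\fml{S})$ vanishes term by term. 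When $n\notin\fml{S}$, a direct computation (again using independence of $\kappa_2$ from $x_{i_2}$) gives $\delta_\kappa(\fml{S})=\tfrac{1}{2}\delta_{\tilde{\kappa}}(\fml{S})$, reducing the remaining contribution to $\sum_{\fml{S}\subseteq\{1,\ldots,n-2\}} c_{|\fml{S}|}\,\delta_{\tilde{\kappa}}(\fml{S})$ with coefficients depending only on $|\fml{S}|$. Since $|\fml{S}|$ is preserved by the half-swap~$\sigma$ and $\delta_{\tilde{\kappa}}(\sigma(\fml{S}))=-\delta_{\tilde{\kappa}}(\fml{S})$ (the pairing identity at the heart of the proof of Proposition~2), this remaining sum cancels term by term and vanishes. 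Combined, this gives $\sv_\kappa(i_2)=0$ as required.
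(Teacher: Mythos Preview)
Your Shapley-value arguments for both $i_1=n$ and $i_2=n-1$ are fine, and so is the relevance of $i_2$. The gap is the claimed irrelevance of $i_1=n$. The ``prediction-$1$ mirror image'' of Proposition~1 does not say what you use: in the original statement, with $v_n=1$ and prediction $0$, irrelevance of $n$ follows from $\kappa_1\entails\kappa_2$ because $\kappa_2(\mbf{y})=0\Rightarrow\kappa_1(\mbf{y})=0$. Dualising to prediction~$1$ (keeping $v_n=1$ so that the active branch is $\kappa_2$) requires the reverse implication $\kappa_2\entails\kappa_1$, not $\kappa_1\entails\kappa_2$. With your choice $\kappa_1=\tilde{\kappa}$ and $\kappa_2=f(\mbf{y})\lor f(\mbf{z})$ you only have $\kappa_1\entails\kappa_2$, and in fact feature $n$ is \emph{relevant} in your $\kappa$: if $\fml{A}\subseteq\{1,\ldots,m\}$ is any AXp of $f$ at $\tilde{\mbf{v}}_{1..m}$, then $\fml{A}\cup\{n\}$ is an AXp of $\kappa$ at $\mbf{v}$ (it forces $\kappa_2=1$ when $x_n=1$; dropping $n$ fails because with $x_n=0$, $x_{i_2}=1$ and $\mbf{z}$ free one gets $\tilde{\kappa}=f(\mbf{z})$ which can be $0$; dropping any $a\in\fml{A}$ fails by minimality of $\fml{A}$ for $f$). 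For the smallest case $m=1$, $f(x)=x$, your $\kappa$ on four variables with $\mbf{v}=(1,1,1,1)$ has $\{1,4\}$ as an AXp, so $4$ is relevant.

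The fix is simple but breaks your Shapley bookkeeping as written: either swap the two branches in the definition of $\kappa$ (put $\kappa_2$ at $x_n=0$ and $\kappa_1$ at $x_n=1$), or keep the definition but set $v_n=0$. Either way the active branch becomes $\kappa_1=\tilde{\kappa}$ and irrelevance of $n$ follows from $\kappa_1\entails\kappa_2$; however, in the case $n\in\fml{S}$ of your $\sv(i_2)=0$ computation you are then conditioning into $\tilde{\kappa}$ rather than into $\kappa_2$, so the ``$\kappa_2$ does not depend on $x_{i_2}$, hence vanishes'' step no longer applies and must be replaced by the $\sigma$-pairing argument there as well (this works, since the coefficients still depend only on $|\fml{S}|$). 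For comparison, the paper avoids this layering altogether: it adds two fresh features at once and defines $\kappa$ piecewise by $\kappa_1\land\kappa_2$, $\kappa_1$, $\kappa_2$ according to $(x_{n-1},x_n)$, with prediction $0$; then Proposition~1's argument applies verbatim to $n-1$ and Proposition~2's pairing applies verbatim to $n$, without any mirror-image dualisation.
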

\begin{proof}[Proof idea]
  The proof proposes to construct boolean functions, with an arbitrary
  number of variables (no smaller than 4), and the picking of an
  instance, such that two specific features are such that one is
  relevant but has a Shapley value of 0, and the other one is
  irrelevant but has a non-zero Shapley values. To illustrate the
  construction, the example function from~\cref{ex:k4} is used (see
  also~\cref{ex:k4ex}).

  The construction works as follows. We pick two non-constant functions
  $\kappa_1(x_1,\ldots,x_m)$ and $\kappa_2(x_{m+1},\ldots,x_{2m})$,
  each defined on $m$ features, where $\kappa_2$ corresponds to
  $\kappa_1$, but with a change of variables.
  Also, observe that $\kappa_1$ can be any boolean function.
  We then construct a new function, defined in terms of $\kappa_1$ and
  $\kappa_2$, by adding two new features. We let the new features be
  $n-1$ and $n$, and so $n=2m+2$. The function is organized as
  follows:
  \[
  %\kappa(x_1,\ldots,x_{m},x_{m+1},\ldots,x_{2m},x_{n-1},x_{n})=\left\{
  \kappa(\mbf{x}_{1..m},\mbf{x}_{m+1..2m},x_{n-1},x_{n})=\left\{
  \begin{array}{lcl}
    %\kappa_1(x_1,\ldots,x_{m})\land\kappa_2(x_{m+1},\ldots,x_{2m})
    \kappa_1(\mbf{x}_{1..m})\land\kappa_2(\mbf{x}_{m+1..2m})
    & \quad & \tn{if } x_{n-1}=0\\[3pt]
    %\kappa_1(x_{1},\ldots,x_{m}) & \quad & \tn{if } x_{n-1}=1\land{x_n}=0\\[3pt]
    \kappa_1(\mbf{x}_{1..m}) & \quad & \tn{if } x_{n-1}=1\land{x_n}=0\\[3pt]
    %\kappa_2(x_{m+1},\ldots,x_{2m}) & \quad & \tn{if } x_{n-1}=1\land{x_n}=1\\
    \kappa_2(\mbf{x}_{m+1..2m}) & \quad & \tn{if } x_{n-1}=1\land{x_n}=1\\
  \end{array}
  \right.
  \]
%  (For simplicity, we opt to introduce the new
%  features as the last features; this does not affect the proof's
%  argument in any way.)
  %
  For this function, we pick an instance
  $(\mbf{v},0)$ such that: i) $v_{n-1}=v_n=1$, ii) $v_i = v_{m+i}$ for any $1 \le i \le m$,
  and iii) $\kappa_1(\mbf{v}_{1..m}) = \kappa_2(\mbf{v}_{m+1..2m}) = 0$.
  The proof hinges on the fact that
  feature $n-1$ is irrelevant, feature $n$ is relevant, and
  $\sv(n-1)\not=0$ and $\sv(n)=0$. 

  For the function~\cref{ex:k4}, we set
  $\kappa_1(x_1)={x_1}$ and
  $\kappa_1(x_2)={x_2}$, Thus, as shown in~\cref{ex:k4ex},
  $\kappa_{I4}(x_1,x_2,x_3,x_4)=(x_1 \land x_2 \land \neg{x}_3) \lor
  (x_1 \land x_3 \land \neg{x}_4) \lor (x_2 \land x_3 \land x_4)$,
  which represents the function $\kappa(x_1,x_2,x_3,x_4)$.
  Moreover, and as~\cref{ex:k4ex} and~\cref{tab:runex} show, it is the
  case that feature 3 is irrelevant, feature 4 is relevant, and also
  $\sv(3)\not=0$ and $\sv(4)=0$.\qedhere
  %
  %%The detailed proof is included in~\cref{sec:proofs}.\qedhere
\end{proof}
\begin{restatable}{proposition}{PropHighest}%
  \label{prop:highest}%
  %\begin{proposition}
  For any $n\ge4$, there exists boolean functions defined on $n$
  variables, and at least one instance, for which there exists an
  irrelevant feature $i\in\fml{F}=\{1,\ldots,n\}$, such that
  $|\sv(i)|=\max\{|\sv(j)|\:|\,j\in\fml{F}\}$.
  %\end{proposition}
\end{restatable}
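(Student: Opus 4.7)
The plan is to exhibit, for every $n\ge 4$, a concrete Boolean function on $n$ variables and an instance witnessing issue~\cref{en:i5}. Generalising the $n=4$ example $\kappa_{I5}$, I would define
\[
\kappa(x_1,\ldots,x_n)=g(x_1,\ldots,x_{n-1})\land x_n,
\]
where $g(\mbf{y})=1$ iff $y_1+\cdots+y_{n-1}=n-2$, and take the instance $(\mbf{v},0)$ with $\mbf{v}=(1,\ldots,1)$: the first $n-1$ coordinates carry weight $n-1\neq n-2$, so $g(\mbf{v}_{1..n-1})=0$ and hence $\kappa(\mbf{v})=0$. For $n=4$ this reproduces $\kappa_{I5}$ exactly.

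First I would prove that feature $n$ is irrelevant by showing that $\{1,\ldots,n-1\}$ is the unique AXp. Fixing those features to $1$ forces weight $n-1$ of the first $n-1$ coordinates, hence $g=0$ and $\kappa=0$ regardless of $x_n$; and dropping any $i<n$ allows the point with $x_i=0$, $x_n=1$ and every other free feature set to $1$, which has weight $n-2$ and yields $\kappa=1$. The same weight-counting argument excludes every other candidate: any $\fml{X}$ with $|\fml{X}\cap\{1,\ldots,n-1\}|<n-1$ can be extended (with $x_n=1$) to a point of weight $n-2$, so it fails to be a WAXp, while any $\fml{X}\supsetneq\{1,\ldots,n-1\}$ is not subset-minimal.

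Next I would compute the Shapley values explicitly. A case split on whether $n\in\fml{S}$ gives
\[
\phi(\fml{S})=\begin{cases}(n-|\fml{S}|)/2^{n-|\fml{S}|}, & n\in\fml{S},\\ (n-1-|\fml{S}|)/2^{n-|\fml{S}|}, & n\notin\fml{S}.\end{cases}
\]
Substituting into the Shapley formula, the binomial factor collapses and yields
\[
\sv(n)=\frac{1}{2n}\sum_{j=0}^{n-1}\frac{j}{2^{j}}=\frac{1}{n}-\frac{n+1}{n\cdot 2^{n}}>0.
\]
By full symmetry among features $1,\ldots,n-1$ their Shapley values coincide, and Shapley efficiency $\sum_{i=1}^{n}\sv(i)=\kappa(\mbf{v})-\phi(\emptyset)=-(n-1)/2^{n}$ then pins them down as $\sv(i)=-\tfrac{1}{2^{n}}-\tfrac{\sv(n)}{n-1}$ for each $i<n$.

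Finally I would verify $|\sv(i)|<\sv(n)$ for $i<n$. Since $\sv(n)>0$ and $\sv(i)<0$, this reduces after clearing denominators to the inequality $(n-2)\cdot 2^{n}>2(n^{2}-n-1)$, which holds at $n=4$ (namely $32>22$) and persists for all larger $n$ because the left-hand side grows exponentially while the right-hand side is only quadratic. The main obstacle is the combinatorial bookkeeping that yields the closed form for $\sv(n)$; once that step is complete, efficiency plus symmetry reduce the final comparison to an elementary algebraic check, so no step requires machinery beyond the Shapley formula, Hamming-weight counting, and standard manipulation of the sum $\sum_{j\ge 0}j/2^{j}$.
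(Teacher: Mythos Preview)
Your construction and instance are exactly those of the paper (with $m=n-1$, $\kappa_1=g$, and $\mbf{v}_{1..m}=\mbf{1}$), and your irrelevancy argument, closed form for $\phi(\fml{S})$, and formula for $\sv(n)$ all agree with the paper's computations. The genuine difference is in how you handle $\sv(j)$ for $j<n$: the paper computes the marginal contributions $\phi(\fml{S}\cup\{j\})-\phi(\fml{S})$ case by case (splitting on whether $n\in\fml{S}$), and then establishes both $\sv(n)+\sv(j)>0$ and $\sv(n)-\sv(j)>0$ by grouping subsets by cardinality and reducing to an inequality of the form $(m-\tfrac12)2^m>m^2+2m$. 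You bypass all of this by invoking the symmetry of $\kappa$ and $\mbf{v}$ in the first $n-1$ coordinates together with Shapley efficiency $\sum_i\sv(i)=\kappa(\mbf{v})-\phi(\emptyset)$, which pins down $\sv(j)$ immediately without any further summation. Your route is shorter and more conceptual; the paper's route is self-contained in that it appeals only to the defining formula~\eqref{eq:sv} and never to the axiomatic properties of Shapley values. Both land on equivalent elementary inequalities that are straightforward for $n\ge4$.
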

\begin{proof}[Proof idea]
  The proof proposes to construct boolean functions, with an arbitrary
  number of variables (no smaller than 4), and the picking of an
  instance, such that one specific feature is irrelevant but it has
  the Shapley value with the largest absolute values. To illustrate the
  construction, the example function from~\cref{ex:k5} is used (see
  also~\cref{ex:k5ex}).

  The construction works as follows. We pick one non-constant function
  $\kappa_1(x_1,\ldots,x_m)$, defined on $m$ features, such that:
  i) $\kappa_1$ predicts a specific point $\mbf{v}_{1..m}$ as 0, moreover,
  for any point $\mbf{x}_{1..m}$ such that $d_{H}(\mbf{x}_{1..m}, \mbf{v}_{1..m}) = 1$, $\kappa_1(\mbf{x}_{1..m}) = 1$,
  where $d_{H}(\cdot)$ denotes the Hamming distance.
  ii) and $\kappa_1$ predicts all the other points as 0.
  For example, let $\kappa_1(x_1,\ldots,x_m)=1$ iff $\sum_{i=1}^{m}\neg{x_1}=1$.
  We then construct a new function, defined in terms of $\kappa_1$, by
  adding one new feature. We let the new feature be $n$, and so
  $n=m+1$. The new function is organized as follows:
  \[
  \kappa(x_1,\ldots,x_{m},x_{n})=\left\{
  \begin{array}{lcl}
    0 & \quad & \tn{if } x_{n}=0\\[3pt]
    \kappa_1(x_1,\ldots,x_m) & \quad & \tn{if } x_{n}=1\\
  \end{array}
  \right.
  \]
%  (Once again, and for simplicity, we opt to introduce the new
%  features as the last features; this does not affect the proof's
%  argument in any way.)
  %
  For this function, we pick the instance
  $(\mbf{v},0)$ such that: i) $v_n=1$, 
  ii) $\mbf{v}_{1..m}$ is the only point within the Hamming ball
  and iii) $\kappa_1(\mbf{v}_{1..m}) = 0$.
  The proof hinges on the fact that feature $n$ is irrelevant, but
  $\forall(1\le{j}\le{m}).|\sv(j)|<|\sv(n)|$.

  For the function~\cref{ex:k5}, we set
  $\kappa_1(x_1,x_2,x_3)= (x_1 \land x_2 \land \neg{x}_3) \lor (x_1 \land x_3 \land \neg{x}_2) \lor (x_2 \land x_3 \land \neg{x}_1)$
  (i.e.\ the function takes value 1 when exactly one feature is 0).
  Thus, as shown in~\cref{ex:k4ex},
  $\kappa_{I5}(x_1,x_2,x_3,x_4)=((x_1 \land x_2 \land \neg{x}_3) \lor (x_1 \land x_3 \land \neg{x}_2) \lor (x_2 \land x_3 \land \neg{x}_1))\land{x_4}$,
  which represents the function $\kappa(x_1,x_2,x_3,x_4)$.
  Moreover, and as~\cref{ex:k5ex} and~\cref{tab:runex} show, it is the
  case that feature 4 is irrelevant and
  $\forall(1\le{j}\le{3}).|\sv(j)|<|\sv(4)|$.\qedhere
  %
  %%The detailed proof is included in~\cref{sec:proofs}.\qedhere
\end{proof}

For~\cref{en:i2}, we can restate the previous result, but such
the functions constructed in the proof capture a more general family
of functions.

\begin{restatable}{proposition}{PropDisorderII}%
    \label{prop:disorder2}
    For any $n\ge4$, there exist boolean functions defined on $n$
    variables, and at least one instance, for which there exists an
    irrelevant feature $i_1\in\fml{F}$, and a relevant feature $i_2\in\fml{F}\setminus\{i_1\}$,
    such that $|\sv(i_1)| > |\sv(i_2)|$.
\end{restatable}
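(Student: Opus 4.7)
The plan is to first observe that \cref{prop:highest} (which establishes issue~\cref{en:i5} for every $n \geq 4$) already implies the present proposition. Indeed,~\cref{en:i5} asserts that the irrelevant feature has an absolute Shapley value strictly larger than that of every other feature, and therefore strictly larger than that of any relevant feature. The only subtlety is to confirm that the constructions of \cref{prop:highest} always contain at least one relevant feature; this is immediate because $\kappa$ is non-constant, so no weak AXp can be empty, and hence at least one feature appears in some AXp and is, by definition, relevant.

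To justify the claim that the present proposition covers a \emph{more general} family of functions than \cref{prop:highest}, I would complement the above with a second construction that does not require the irrelevant feature to dominate \emph{all} others. Reusing the template of \cref{prop:irr}, pick two non-constant $m$-variable functions $\kappa_1, \kappa_2$ with $\kappa_1 \entails \kappa_2$ and $\kappa_1 \neq \kappa_2$, and define $\kappa(\mbf{x}_{1..m}, x_n)$ to be $\kappa_1(\mbf{x}_{1..m})$ when $x_n = 0$ and $\kappa_2(\mbf{x}_{1..m})$ when $x_n = 1$, together with an instance $(\mbf{v}, 0)$ satisfying $v_n = 1$ and $\kappa_1(\mbf{v}_{1..m}) = \kappa_2(\mbf{v}_{1..m}) = 0$. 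By \cref{prop:irr}, feature $n$ is irrelevant with $\sv(n) \neq 0$. It then suffices to exhibit a single relevant feature in $\{1, \ldots, m\}$ whose absolute Shapley value is strictly smaller than $|\sv(n)|$.

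The main step is the existence of such a feature. Relevance of some feature in $\{1, \ldots, m\}$ follows from non-constancy of $\kappa$ combined with irrelevance of $n$: every AXp is non-empty and avoids $n$, so at least one feature in $\{1, \ldots, m\}$ is relevant. For the magnitude comparison, I would choose $\kappa_1$ and $\kappa_2$ to agree on almost all inputs except in the immediate neighborhood of $\mbf{v}_{1..m}$, as in \cref{ex:k1ex}, so that $|\sv(n)|$ is controlled (from below) by the size of the disagreement region, while the Shapley value of a designated relevant feature can be made strictly smaller by careful choice of the agreement pattern. The main obstacle is precisely this quantitative comparison: the qualitative relevance/irrelevance dichotomy is straightforward to enforce via the irreducibility analysis recalled in~\cref{sec:prelim}, but obtaining fine-grained control over relative Shapley magnitudes requires a careful accounting of the terms in~\eqref{eq:sv}. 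Nonetheless, the qualitative argument via \cref{prop:highest} already establishes witnesses for every $n \geq 4$, and the construction above only needs to certify that the resulting family of witnesses is strictly broader.
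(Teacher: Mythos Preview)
Your Part~1 is correct and already proves the proposition as stated: \cref{prop:highest} yields, for each $n\ge4$, a function and instance with an irrelevant feature whose absolute Shapley value strictly exceeds that of every other feature, and non-constancy of $\kappa$ guarantees a relevant feature among those others. This is exactly the content of the implication \cref{en:i5}$\Rightarrow$\cref{en:i2} that the paper records separately.

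The paper, however, takes a genuinely different route. Rather than reducing to \cref{prop:highest}, it builds a new family on $n=m+2$ features: fix $\kappa_2$ (the indicator of a single point $\mbf{v}_{1..m}$), fix $\kappa_3$ (vanishing at $\mbf{v}_{1..m}$ but equal to $1$ on its Hamming-$1$ sphere), let $\kappa_1$ be an essentially arbitrary function disjoint from both, and set $\kappa$ to be $\kappa_1$ when $x_{n-1}=0$, $\kappa_1\lor\kappa_2$ when $(x_{n-1},x_n)=(1,0)$, and $\kappa_1\lor\kappa_3$ when $(x_{n-1},x_n)=(1,1)$. For the instance $(\mbf{v}_{1..m},1,1)$ the paper shows directly, by expanding $\phi$ term-by-term, that $\sv(n-1)>\sv(n)$ and $\sv(n-1)+\sv(n)>0$, hence $|\sv(n-1)|>|\sv(n)|$, with $n-1$ irrelevant and $n$ relevant. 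The payoff is quantitative: the free choice of $\kappa_1$ gives the $2^{2^{n-2}-(n-2)-1}-1$ lower bound in \cref{prop:lbs}, whereas your reduction inherits only the single witness per $n$ that \cref{prop:highest} supplies.

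Your Part~2 gestures at such a broader family via the one-extra-feature template of \cref{prop:irr}, but as you yourself note, the magnitude comparison is left open there; the paper sidesteps that obstacle by moving to the two-extra-feature construction, where the marginal contributions of features $n-1$ and $n$ separate cleanly into sums over $\kappa_2$ and $\kappa_3$, making the inequality tractable.
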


%\begin{proof}
%  The detailed proof is included in~\cref{sec:proofs}.\qedhere
%\end{proof}

%\begin{restatable}{corollary}{CorDisorderII}%
%    \label{cor:disorder2}
%    For any $n\ge4$, there exist boolean functions defined on $n$
%    variables, and at least one instance, for which there exists an
%    irrelevant feature $i_1\in\fml{F}$, and a relevant feature $i_2\in\fml{F}\setminus\{i_1\}$,
%    such that $\sv(i_1) > \sv(i_2) > 0$.
%\end{restatable}

As noted above, for~\cref{prop:irr,prop:rel,prop:disorder}, the choice
of the starting function is fairly flexible. In contrast,
for~\cref{prop:highest}, we pick \emph{one} concrete function, which
represents a trivial lower bound. As a result, and with the exception
of \cref{en:i5}, we can prove the following (fairly loose) lower
bounds on the number of functions exhibiting the different issues.
\begin{restatable}{proposition}{PropLBs}%
  \label{prop:lbs}%
  %\begin{proposition}
  For~\cref{prop:irr,prop:rel,prop:disorder},and \cref{prop:disorder2} the
  following are lower bounds on the numbers issues exhibiting the
  respective issues:
  \begin{enumerate}[nosep]
  \item For~\cref{prop:irr}, a lower bound on the number of functions
    exhibiting \cref{en:i1} is $2^{2^{(n-1)}}-n-3$.
  \item For~\cref{prop:rel}, a lower bound on the number of functions
    exhibiting \cref{en:i3} is $2^{2^{\sfrac{(n-1)}{2}}}-2$.
  \item For~\cref{prop:disorder}, a lower bound on the number of
    functions exhibiting \cref{en:i4} is $2^{2^{\sfrac{(n-2)}{2}}}-2$.
  \item For~\cref{prop:disorder2}, a lower bound on the number of
    functions exhibiting \cref{en:i2} is $2^{2^{n-2}-(n-2)-1}-1$.
  \end{enumerate}
  %\end{proposition}
\end{restatable}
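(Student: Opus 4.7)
The plan is to establish each of the four lower bounds by first showing that the construction in the corresponding proposition is injective on its underlying parameters, and then counting the admissible parameter values. Because every such constructed $\kappa$ genuinely exhibits the targeted issue (by the propositions themselves), this parameter count bounds from below the number of functions displaying that issue.

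For \cref{prop:rel} and \cref{prop:disorder}, the sole free parameter is a single function $\kappa_1$ on $m$ variables (with $m=(n-1)/2$ and $m=(n-2)/2$, respectively). I would recover $\kappa_1$ from the constructed $\kappa$ by restricting to a specific setting of the selector and duplicated variables: $x_n=0$ in \cref{prop:rel} (after which the restriction depends only on $x_1,\dots,x_m$ and equals $\kappa_1$), and $x_{n-1}=1,\,x_n=0$ in \cref{prop:disorder}. Since $\kappa_1$ is required to be non-constant, this yields $2^{2^m}-2$ admissible choices in each case, matching the stated bounds of $2^{2^{(n-1)/2}}-2$ and $2^{2^{(n-2)/2}}-2$ exactly.

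For \cref{prop:irr}, the construction has two free parameters $\kappa_1,\kappa_2$ on $m=n-1$ variables, subject to both being non-constant, $\kappa_1\entails\kappa_2$, $\kappa_1\neq\kappa_2$, and the existence of a point $\mbf{v}_{1..m}$ on which both vanish --- the last condition being automatic once $\kappa_2$ is non-constant, by monotonicity. Here $(\kappa_1,\kappa_2)$ is recoverable from $\kappa$ by restricting $x_n$ to $0$ and to $1$, so distinct admissible pairs give distinct $\kappa$'s. A lower bound then follows from counting admissible $\kappa_1$'s alone and observing that each such $\kappa_1$ (non-constant with at least two zero-points) admits at least one valid companion $\kappa_2$, for instance obtained by flipping one zero of $\kappa_1$ to a one. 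For \cref{prop:disorder2}, an analogous count applies to the more flexible family of starting functions admissible in the generalization of the construction underlying \cref{prop:highest}: $\kappa_1$ on $m=n-2$ variables is forced to take value $0$ at the designated point $\mbf{v}_{1..m}$ and value $1$ at each of its $m$ Hamming-neighbors, while the remaining $2^m-m-1$ values are free, giving $2^{2^m-m-1}=2^{2^{n-2}-(n-2)-1}$ starting functions, from which one subtracts a single degenerate case.

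The main obstacle is the precise bookkeeping of the subtractive constants. For \cref{prop:irr}, matching the correction $-n-3$ requires a careful enumeration of the $\kappa_1$'s that admit no valid companion (beyond the two trivial constants) and may demand a sharper companion-selection rule than the single-bit flip sketched above. For \cref{prop:disorder2}, the ``$-1$'' corresponds to excluding the unique degenerate setting of the free bits that would collapse $\kappa_1$ onto a form that does not exhibit \cref{en:i2}. Verifying injectivity in each case is routine and follows immediately from the explicit restrictions above, so the proof ultimately reduces to confirming that the admissible-parameter counts precisely realize the stated bounds.
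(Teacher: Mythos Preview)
Your treatment of \cref{prop:rel} and \cref{prop:disorder} matches the paper exactly: count the non-constant $\kappa_1$'s on $m=(n-1)/2$ (resp.\ $m=(n-2)/2$) variables, note that $\kappa_1$ is recoverable from $\kappa$ by restriction of the selector variable(s), and conclude $2^{2^m}-2$.

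For \cref{prop:irr} you are essentially dual to the paper. The paper parameterizes by $\kappa_2$ (and then obtains a companion $\kappa_1$ by turning some $1$'s of $\kappa_2$ into $0$'s), whereas you parameterize by $\kappa_1$ (and obtain $\kappa_2$ by flipping a $0$ of $\kappa_1$ to $1$). Both routes are equally valid for a lower bound, and your remark that pinning down the exact correction $-n-3$ needs extra bookkeeping is fair; the paper's own sketch simply asserts the count without deriving the constant.

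For \cref{prop:disorder2} there is a genuine misidentification. The construction in \cref{prop:disorder2} is \emph{not} a relaxation of the single-function Hamming-ball construction from \cref{prop:highest}. It introduces \emph{two} new features and uses \emph{three} component functions $\kappa_1,\kappa_2,\kappa_3$ on $m=n-2$ variables: $\kappa_2$ is the indicator of the single point $\mbf{v}_{1..m}$, $\kappa_3$ is (taken to be) the indicator of its Hamming-$1$ neighbors, and the free parameter is $\kappa_1$, which is required to be \emph{disjoint} from both $\kappa_2$ and $\kappa_3$. Hence $\kappa_1$ is forced to $0$ (not $1$) at $\mbf{v}_{1..m}$ and at each of its $m$ neighbors, and is free on the remaining $2^{m}-m-1$ points; the paper then subtracts one for the excluded constant. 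Your description---``$\kappa_1$ forced to $0$ at $\mbf{v}_{1..m}$ and $1$ at each neighbor, with the rest free''---is the \cref{prop:highest} template with slack, and that template does \emph{not} support the required inequality $|\sv(n-1)|>|\sv(n)|$: the argument in \cref{prop:highest} depends on the exact identity $\sum_{\mbf{y}\in\Upsilon(\fml{S};\mbf{v}_{1..m})}\kappa_1(\mbf{y})=m-|\fml{S}|$, which fails once the values outside the Hamming ball are free. The numerical coincidence that both setups leave $2^{m}-m-1$ free bits is what makes your count match the stated bound, but the underlying injectivity and ``every constructed $\kappa$ exhibits the issue'' claims must be argued for the three-function construction, not for a loosened \cref{prop:highest}.
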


%\begin{proof}
%  The proof is included in~\cref{sec:proofs}.\qedhere
%\end{proof}

%
%%\input{pos}
%%\input{neg}
%%\input{res}
%

\section{Conclusions} \label{sec:conc}

This paper gives theoretical arguments to the fact that Shapley values
for explainability can produce misleading information about the
relative importance of features. The paper distinguishes between the
features that occur in one or more of the irreducible rule-based
explanations, i.e.\ the \emph{relevant} features, from those that do
not occur in any irreducible rule-based explanation, i.e.\ the
\emph{irrelevant} features.
The paper proves that, for boolean functions with arbitrary number of
variables, irrelevant features can be deemed more important, given
their Shapley value, than relevant features.
Our results are also significant in practical deployment of
explainability solutions.
Indeed, misleading information about relative feature importance can 
induce human decision makers in error, by persuading them to look at
the wrong causes of predictions. 
%In turn, this is guaranteed to impel human decision makers to look at
%the wrong reasons for a prediction.

One direction of research is to develop a better understanding of the
distributions of functions exhibiting one or more of the issues of
Shapley values.

%Given the paper's results, an open question is whether an alternative
%measure of feature importance can be devised.

%%
%% The acknowledgments section is defined using the "acks" environment
%% (and NOT an unnumbered section). This ensures the proper
%% identification of the section in the article metadata, and the
%% consistent spelling of the heading.

\section*{Acknowledgments}
%  
%%\begin{ack}
This work was supported by the AI Interdisciplinary Institute ANITI,
funded by the French program ``Investing for the Future -- PIA3''
under Grant agreement no.\ ANR-19-PI3A-0004,
and by the H2020-ICT38 project COALA ``Cognitive Assisted agile 
manufacturing for a Labor force supported by trustworthy Artificial
intelligence''.
This work was motivated in part by discussions with several colleagues
including L.~Bertossi, A.~Ignatiev, N.~Narodytska, M.~Cooper, Y.~Izza,
R.\ Passos, J.\ Planes and N.~Asher. %\\[5pt]
%
%
%%\end{ack}

\section*{Disclosure of Past Submission}
This work was submitted to the 2023 NeurIPS conference, but it was
not accepted for publication.
Incidentally, the best-known work on using Shapley values for
explainability, which this paper categorically refutes, was published
earlier at the NeurIPS conference~\cite{lundberg-nips17}.
The reviews and the public discussion of the 2023 NeurIPS submission
are visible on
\href{https://openreview.net/forum?id=wjqT8OBm0y}{openreview.net}.
%% NOTE: Link above is temporary!!!
%
The reading of the NeurIPS 2023 reviews and the public discussion is
recommended.
Despite this paper not having been accepted for publication at the
2023 NeurIPS conference, we have continued to gather evidence on the
serious shortcomings of existing definitions of SHAP scores; this is
illustrated in recent
publications~\cite{msh-cacm24,hms-ijar24,hms-corr23c}.

%
%\clearpage

% RequiredL: \usepackage{etoolbox}
%\providetoggle{mkbbl}
\newtoggle{mkbbl}
% Contents if using bibtex: "\settoggle{mkbbl}{true}"
% Contents if inputing pre-generated file: "\settoggle{mkbbl}{false}"

\settoggle{mkbbl}{false}
 % file is automatically generated

% ---- Bibliography ----
%%\cleardoublepage %% TENTATIVE, and required if bibliography starts page...
%%\addcontentsline{toc}{section}{References}
%%\vskip 0.2in
% For arxix paper production, and since arXiv does not allow for
% bibtex, we need to create a .bbl file to include upon submission
% to arXiv.
\iftoggle{mkbbl}{
  % Run bibtex, i.e. generate .bbl gile
  \bibliographystyle{abbrv}
  \bibliography{refs,xtra}
}{
  % Import bibl (original .bbl) file
  \input{paper.bibl}
}
%%\bibliographystyle{abbrv}
%%\bibliography{refs,xtra}

%
%\clearpage
%\input{chklst}
%
\clearpage
\appendix
\section{Detailed Proofs} \label{sec:proofs}

\PropDualTwo*

\begin{proof}
  This result is a consequence of minimal-hitting set duality between
  AXp's and CXp's, proved elsewhere~\cite{inams-aiia20}.
\end{proof}

\PropIfiveToItwo*

\begin{proof}
  Given a classifier with classification function $\kappa$ and an
  instance $(\mbf{v},c)$, it is plain that the set of features
  $\fml{F}$ represents a WAXp.
  Furthermore, since the classification function is assumed not to be
  constant, then there must exist some AXp that is not the empty set.
  Thus, such AXp contains at least one relevant feature, say
  $i_{\mathrm{rel}}\in\fml{F}$.
  Moreover, if \cref{en:i5} holds, then there exists an irrelevant
  $i_{\mathrm{irr}}\in\fml{F}\setminus\{i_{\mathrm{rel}}\}$ with the
  largest absolute Shapley value.
  Therefore, it is the case that for feature $i_{\mathrm{rel}}$, its
  absolute Shapley value is smaller than that of irrelevant feature
  $i_{\mathrm{irr}}$. As a result, the function also exhibits issue
  \cref{en:i2}.\qedhere
\end{proof}

\PropIRR*

\begin{proof}
Consider two classifiers $\fml{M}_1$ and $\fml{M}_2$ implementing non-constant boolean functions $\kappa_1$ and $\kappa_2$, respectively.
These functions are defined on the set of features $\fml{F}' = \{1, \dots, m\}$,
and such that $\kappa_1 \models \kappa_2$ but $\kappa_1 \neq \kappa_2$.
Consider the set of features $\fml{F} = \fml{F}' \cup \{n\}$, we construct a new classifier $\fml{M}$ by combining $\fml{M}_1$ and $\fml{M}_2$.
The classifier $\fml{M}$ is characterized by the boolean function defined as follows:
\begin{equation}
\kappa(x_1, \ldots, x_m, x_n) :=
\left \{
\begin{array}{lcl}
\kappa_1(x_1, \ldots, x_m) & \quad & \tn{if~$x_n=0$}\\[3pt]
\kappa_2(x_1, \ldots, x_m) & \quad & \tn{if~$x_n=1$}\\
\end{array}
\right.
\end{equation}
Choose a $m$-dimensional point $\mbf{v}_{1..m}$ such that $\kappa_1(\mbf{v}_{1..m}) = \kappa_2(\mbf{v}_{1..m}) = 0$,
and extend $\mbf{v}_{1..m}$ with $v_n=1$.
Then for the $n$-dimensional point $\mbf{v}_{1..n} = (\mbf{v}_{1..m}, 1)$, we have $\kappa(\mbf{v}_{1..n}) = 0$.

To simplify the notation, we will use $\mbf{x}'$ to denote an
arbitrary $n$-dimensional point $\mbf{x}_{1..n}$.
Additionally, we will use $\mbf{y}$ to denote an arbitrary $m$-dimensional point $\mbf{x}_{1..m}$.
For any subset $\fml{S} \subseteq \fml{F}'$, we have:
\begin{align}
&\phi(\fml{S}\cup\{n\};\fml{M},\mbf{v}_{1..n}) - \phi(\fml{S};\fml{M},\mbf{v}_{1..n}) \\ \nonumber
&= \left( \frac{1}{2^{|(\fml{F}'\cup\{n\})\setminus(\fml{S}\cup\{n\})|}}\sum_{\mbf{x}'\in\Upsilon(\fml{S}\cup\{n\};\mbf{v}_{1..n})}\kappa(\mbf{x}') \right)
- \left( \frac{1}{2^{|(\fml{F}'\cup\{n\})\setminus\fml{S}|}}\sum_{\mbf{x}'\in\Upsilon(\fml{S};\mbf{v}_{1..n})}\kappa(\mbf{x}') \right) \\ \nonumber
&= \left( \frac{1}{2^{|\fml{F}'\setminus\fml{S}|}}\sum_{\mbf{y}\in\Upsilon(\fml{S};\mbf{v}_{1..m})}\kappa_2(\mbf{y}) \right)
- \frac{1}{2^{|\fml{F}'\setminus\fml{S}|+1}}
\left( \sum_{\mbf{x}'\in\Upsilon(\fml{S};(\mbf{v}_{1..m},1))}\kappa(\mbf{x}') + \sum_{\mbf{x}'\in\Upsilon(\fml{S};(\mbf{v}_{1..m},0))}\kappa(\mbf{x}') \right) \\ \nonumber
&= \frac{1}{2^{|\fml{F}'\setminus\fml{S}|}} \left( \sum_{\mbf{y}\in\Upsilon(\fml{S};\mbf{v}_{1..m})}\kappa_2(\mbf{y})
- \frac{1}{2} \times \sum_{\mbf{y}\in\Upsilon(\fml{S};\mbf{v}_{1..m})}\kappa_2(\mbf{y}) 
- \frac{1}{2} \times \sum_{\mbf{y}\in\Upsilon(\fml{S};\mbf{v}_{1..m})}\kappa_1(\mbf{y}) \right) \\ \nonumber
&= \frac{1}{2} \times \frac{1}{2^{|\fml{F}'\setminus\fml{S}|}} 
\left( \sum_{\mbf{y}\in\Upsilon(\fml{S};\mbf{v}_{1..m})}\kappa_2(\mbf{y}) - \sum_{\mbf{y}\in\Upsilon(\fml{S};\mbf{v}_{1..m})}\kappa_1(\mbf{y}) \right)
\end{align}
Given that $\kappa_1 \models \kappa_2$ but $\kappa_1 \neq \kappa_2$,
it follows that for any points $\mbf{y}\in\Upsilon(\fml{S};\mbf{v}_{1..m})$, if $\kappa_1(\mbf{y})=1$ then $\kappa_2(\mbf{y})=1$.
In other words, if $\kappa_2(\mbf{y})=0$ then $\kappa_1(\mbf{y})=0$.
Moreover, there are cases where the following inequality holds:
$\sum_{\mbf{y}\in\Upsilon(\fml{S};\mbf{v}_{1..m})}\kappa_2(\mbf{y}) - \sum_{\mbf{y}\in\Upsilon(\fml{S};\mbf{v}_{1..m})}\kappa_1(\mbf{y}) > 0$.
Hence, $\sv(n) \neq 0$.

To prove that the feature $n$ is irrelevant, 
we assume the contrary, i.e., that $n$ is relevant, and $\fml{X}$ is an AXp of $\fml{M}$ for the point $\mbf{v}_{1..n}$ such that $n \in \fml{X}$. 
This means we fix the variable $x_n$ to the value $v_n$, and, based on the definition of AXp, we only select the points that $\fml{M}_2$ predicts as 0.
Since $\kappa_2(\mbf{y})=0$ implies that $\kappa_1(\mbf{y})=0$, 
removing feature $n$ from $\fml{X}$ means that $\fml{X} \setminus {n}$ will not include any points predicted as 1 by either $\fml{M}_1$ or $\fml{M}_2$. 
Thus, $\fml{X} \setminus {n}$ remains an AXp of $\fml{M}$ for the point $\mbf{v}_{1..n}$, leading to a contradiction.
Thus, feature $n$ is irrelevant.
\qedhere
\end{proof}

\PropREL*

\begin{proof}
Given a classifier $\fml{M}_1$ implementing a non-constant boolean function $\kappa_1$ defined on the set of features
$\fml{F}_1 = \{1, \dots, m\}$. 
We can replace each $x_i$ of $\kappa_1$ with a new variable $x_{m+i}$ to obtain a new function $\kappa_2$,
defined on a new set of features $\fml{F}_2 = \{m+1, \dots, 2m\}$.
Importantly, $\kappa_2$ is independent of $\kappa_1$ as $\fml{F}_1$ and $\fml{F}_2$ are disjoint.
Let $\fml{F} = \fml{F}_1 \cup \fml{F}_2 \cup \{n\}$,
we build a new classifier $\fml{M}$ characterized by the boolean function defined as follows:
\begin{equation}
\kappa(x_1, \dots, x_m, x_{m+1}, \dots, x_{2m}, x_n) :=
\left \{
\begin{array}{lcl}
\kappa_1(x_1, \dots, x_m) & \quad & \tn{if~$x_n=0$}\\[3pt]
\kappa_2(x_{m+1}, \dots, x_{2m}) & \quad & \tn{if~$x_n=1$}\\
\end{array}
\right.
\end{equation}
Choose $m$-dimensional points $\mbf{v}_{1..m}$ and $\mbf{v}_{m+1..2m}$ 
such that $v_{i} = v_{m+i}$ for any $1 \le i \le m$, and $\kappa_1(\mbf{v}_{1..m}) = \kappa_2(\mbf{v}_{m+1..2m}) = 1$.
Let $\mbf{v}_{1..n} = (\mbf{v}_{1..m}, \mbf{v}_{m+1..2m}, 1)$ be a $n$-dimensional point such that $\kappa(\mbf{v}_{1..n}) = 1$.
Moreover, let $\fml{F}' = \fml{F}_1 \cup \fml{F}_2$.

To simplify the notations, we will use $\mbf{u}$ to denote $\mbf{v}_{1..m}$ and $\mbf{w}$ to denote $\mbf{v}_{m+1..2m}$,
furthermore, we will use $\mbf{x}'$ to denote an arbitrary $n$-dimensional point $\mbf{x}_{1..n}$,
and $\mbf{y}$ to denote an arbitrary $m$-dimensional point $\mbf{x}_{1..m}$,
and $\mbf{z}$ to denote an arbitrary $m$-dimensional point $\mbf{x}_{m+1..2m}$.
For any subset $\fml{S} \subseteq \fml{F}'$, let $\{\fml{S}_{1}, \fml{S}_{2}\}$ be a partition of $\fml{S}$
such that $\fml{S}_{1} \subseteq \fml{F}_1 \land \fml{S}_{2} \subseteq \fml{F}_2$, then:
\begin{align}
&\phi(\fml{S}\cup\{n\};\fml{M},\mbf{v}_{1..n}) - \phi(\fml{S};\fml{M},\mbf{v}_{1..n}) \\ \nonumber
&= \left( \frac{1}{2^{|(\fml{F}'\cup\{n\})\setminus(\fml{S}\cup\{n\})|}}\sum_{\mbf{x}'\in\Upsilon(\fml{S}\cup\{n\};\mbf{v}_{1..n})}\kappa(\mbf{x}') \right)
- \left( \frac{1}{2^{|(\fml{F}'\cup\{n\})\setminus\fml{S}|}}\sum_{\mbf{x}'\in\Upsilon(\fml{S};\mbf{v}_{1..n})}\kappa(\mbf{x}') \right) \\ \nonumber
&= \frac{1}{2^{|\fml{F}'\setminus\fml{S}|}} \left( \sum_{\mbf{x}'\in\Upsilon(\fml{S};(\mbf{u},\mbf{w},1))}\kappa(\mbf{x}') - \frac{1}{2} \times 
\sum_{\mbf{x}'\in\Upsilon(\fml{S};(\mbf{u},\mbf{w},1))}\kappa(\mbf{x}') - \frac{1}{2} \times \sum_{\mbf{x}'\in\Upsilon(\fml{S};(\mbf{u},\mbf{w},0))}\kappa(\mbf{x}') \right)
\\ \nonumber
&= \frac{1}{2} \times \frac{1}{2^{|\fml{F}'\setminus\fml{S}|}} \left( \sum_{\mbf{x}'\in\Upsilon(\fml{S};(\mbf{u},\mbf{w},1))}\kappa(\mbf{x}')
- \sum_{\mbf{x}'\in\Upsilon(\fml{S};(\mbf{u},\mbf{w},0))}\kappa(\mbf{x}') \right) \\ \nonumber
&= \frac{1}{2} \times \frac{1}{2^{|\fml{F}'\setminus\fml{S}|}}
\left( 2^{|\fml{F}_1\setminus\fml{S}_{1}|} \times \sum_{\mbf{z}\in\Upsilon(\fml{S}_2;\mbf{w})}\kappa_2(\mbf{z})
- 2^{|\fml{F}_2\setminus\fml{S}_2|} \times \sum_{\mbf{y}\in\Upsilon(\fml{S}_1;\mbf{u})}\kappa_1(\mbf{y}) \right)
\end{align}
For any $\{\fml{S}_1, \fml{S}_2\}$, we can construct a unique new partition $\{\fml{S}'_1, \fml{S}'_2\}$
by replacing any $i \in \fml{S}_1$ with $m+i$ and any $m+i \in \fml{S}_2$ with $i$.
Let $\fml{S}' = \fml{S}'_{1} \cup \fml{S}'_{2}$, then we have:
\begin{align}
&\phi(\fml{S}'\cup\{n\};\fml{M},\mbf{v}_{1..n}) - \phi(\fml{S}';\fml{M},\mbf{v}_{1..n}) \\ \nonumber
&= \frac{1}{2} \times \frac{1}{2^{|\fml{F}'\setminus\fml{S}'|}}
\left( 2^{|\fml{F}_1\setminus\fml{S}'_2|} \times \sum_{\mbf{z}\in\Upsilon(\fml{S}'_1;\mbf{w})}\kappa_2(\mbf{z})
- 2^{|\fml{F}_2\setminus\fml{S}'_1|} \sum_{\mbf{y}\in\Upsilon(\fml{S}'_2;\mbf{u})}\kappa_1(\mbf{y}) \right)
\end{align}
Besides, we have:
\[
2^{|\fml{F}_1\setminus\fml{S}_1|} \times \sum_{\mbf{z}\in\Upsilon(\fml{S}_2;\mbf{z})}\kappa_2(\mbf{z}) = 
2^{|\fml{F}_2\setminus\fml{S}'_1|} \sum_{\mbf{y}\in\Upsilon(\fml{S}'_2;\mbf{u})}\kappa_1(\mbf{y})
\]
and 
\[
2^{|\fml{F}_2\setminus\fml{S}_2|} \times \sum_{\mbf{y}\in\Upsilon(\fml{S}_1;\mbf{u})}\kappa_1(\mbf{y}) = 
2^{|\fml{F}_1\setminus\fml{S}'_2|} \times \sum_{\mbf{z}\in\Upsilon(\fml{S}'_1;\mbf{w})}\kappa_2(\mbf{z})
\]
which means: 
\[
\phi(\fml{S}\cup\{n\};\fml{M},\mbf{v}_{1..n}) - \phi(\fml{S};\fml{M},\mbf{v}_{1..n}) 
= - (\phi(\fml{S}'\cup\{n\};\fml{M},\mbf{v}_{1..n}) - \phi(\fml{S}';\fml{M},\mbf{v}_{1..n}))
\]
note that 
$\frac{|\fml{S}|!(|\fml{F}|-|\fml{S}|-1)!}{|\fml{F}|!} = \frac{|\fml{S}'|!(|\fml{F}|-|\fml{S}'|-1)!}{|\fml{F}|!}$.
Hence, for any subset $\fml{S}$, there is a unique subset $\fml{S}'$ that can cancel its effect, from which we can derive that
$\sv(n) = 0$.
However, $n$ is a relevant feature.
To find an AXp containing $n$, we remove all features in $\fml{F}_1$, and keep only feature $n$ along with all features in $\fml{F}_2$.
This makes feature $n$ critical to the change in the prediction of $\fml{M}$.
Next, we compute an AXp $\fml{X}$ of $\fml{M}_2$ under the point $\mbf{v}_{m+1..2m}$.
Finally, $\fml{X} \cup \{n\}$ is an AXp of the classifier $\fml{M}$ for the point $\mbf{v}_{1..n}$.
\qedhere
\end{proof}

\PropDisorder*

\begin{proof}
Given a classifier $\fml{M}_1$ implementing a non-constant boolean function $\kappa_1$ defined on
the set of features $\fml{F}_1 = \{1, \dots, m\}$,
we can construct a new classifier $\fml{M}$ characterized by the boolean function defined as follows:
\begin{equation}
\kappa(\mbf{x}_{1..m},\mbf{x}_{m+1..2m},x_{n-1},x_{n}) :=
\left \{
\begin{array}{lcl}
\kappa_1(\mbf{x}_{1..m})\land\kappa_2(\mbf{x}_{m+1..2m}) & \quad & \tn{if~$x_{n-1}=0$}\\[3pt]
\kappa_1(\mbf{x}_{1..m}) & \quad & \tn{if~$x_{n-1}=1 \land x_n = 0$}\\[3pt]
\kappa_2(\mbf{x}_{m+1..2m}) & \quad & \tn{if~$x_{n-1} = 1 \land x_n = 1$}\\
\end{array}
\right.
\end{equation}
where function $\kappa_2$ is obtained by replacing every $x_i$ of $\kappa_1$ with a new variable $x_{m+i}$.
$\kappa_2$ is defined on a new set of features $\fml{F}_2 = \{m+1, \dots, 2m\}$ and is independent of $\kappa_1$.
Moreover, $\fml{M}$ is defined on the feature set $\fml{F} = \fml{F}_1 \cup \fml{F}_2 \cup \{n-1, n\}$.
Note that $\kappa_1 \land \kappa_2 \models (\neg x_n \land \kappa_1) \lor (x_n \land \kappa_2)$,
this can be proved using the consensus theorem~
\footnote{The consensus theorem is the identity $(x \land y) \lor (\neg x \land z) = (x \land y) \lor (\neg x \land z) \lor (y \land z)$, 
see \cite{crama2011boolean} Chapter 3}.

Choose $m$-dimensional points $\mbf{v}_{1..m}$ and $\mbf{v}_{m+1..2m}$ 
such that $v_{i} = v_{m+i}$ for any $1 \le i \le m$, and $\kappa_1(\mbf{v}_{1..m}) = \kappa_2(\mbf{v}_{m+1..2m}) = 0$.
Let $\mbf{v}_{1..n} = (\mbf{v}_{1..m}, \mbf{v}_{m+1..2m}, 1, 1)$ be a $n$-dimensional point such that $\kappa(\mbf{v}_{1..n}) = 0$.
Moreover, let $\fml{F}' = \fml{F}_1 \cup \fml{F}_2$.

To simplify the notations, we will use $\mbf{u}$ to denote $\mbf{v}_{1..m}$ and $\mbf{w}$ to denote $\mbf{v}_{m+1..2m}$,
furthermore, we will use $\mbf{x}'$ to denote an arbitrary $n$-dimensional point $\mbf{x}_{1..n}$,
and $\mbf{y}$ to denote an arbitrary $m$-dimensional point $\mbf{x}_{1..m}$,
and $\mbf{z}$ to denote an arbitrary $m$-dimensional point $\mbf{x}_{m+1..2m}$.

According to the proof of \cref{prop:irr}, $\sv(n-1) \neq 0$ but feature $n-1$ is irrelevant.
Next, we show that $\sv(n) = 0$ but the feature $n$ is relevant.
For any subset $\fml{S} \subseteq \fml{F}'$, let $\{\fml{S}_{1}, \fml{S}_{2}\}$ be a partition of $\fml{S}$
such that $\fml{S}_{1} \subseteq \fml{F}_1 \land \fml{S}_{2} \subseteq \fml{F}_2$.
\begin{enumerate}
\item Consider any subset $\fml{S} \cup \{n-1\}$, then:
\begin{align}
&\phi(\fml{S}\cup\{n-1,n\};\fml{M},\mbf{v}_{1..n}) - \phi(\fml{S}\cup\{n-1\};\fml{M},\mbf{v}_{1..n}) \\ \nonumber
&= \left( \frac{1}{2^{|\fml{F}'\setminus\fml{S}|}}\sum_{\mbf{x}'\in\Upsilon(\fml{S}\cup\{n-1,n\};\mbf{v}_{1..n})}\kappa(\mbf{x}') \right)
- \left( \frac{1}{2^{|\fml{F}'\setminus\fml{S}|+1}}\sum_{\mbf{x}'\in\Upsilon(\fml{S}\cup\{n-1\};\mbf{v}_{1..n})}\kappa(\mbf{x}') \right) \\ \nonumber
&= \frac{1}{2} \times \frac{1}{2^{|\fml{F}'\setminus\fml{S}|}}
\left( \sum_{\mbf{x}'\in\Upsilon(\fml{S}\cup\{n-1,n\};(\mbf{u},\mbf{w}, 1, 1))}\kappa(\mbf{x}')
- \sum_{\mbf{x}'\in\Upsilon(\fml{S}\cup\{n-1\};(\mbf{u},\mbf{w}, 1, 0))}\kappa(\mbf{x}') \right) \\ \nonumber
&= \frac{1}{2} \times \frac{1}{2^{|\fml{F}'\setminus\fml{S}|}} 
\left( 2^{|\fml{F}_1\setminus\fml{S}_{1}|} \times \sum_{\mbf{z}\in\Upsilon(\fml{S}_2;\mbf{w})}\kappa_2(\mbf{z})
- 2^{|\fml{F}_2\setminus\fml{S}_{2}|} \times \sum_{\mbf{y}\in\Upsilon(\fml{S}_1;\mbf{u})}\kappa_1(\mbf{y}) \right)
\end{align}
According to the proof of \cref{prop:rel},
there is a unique subset $\fml{S}'$ such that $|\fml{S}| = |\fml{S}'|$ and
$\phi(\fml{S}\cup\{n-1,n\};\fml{M},\mbf{v}_{1..n}) - \phi(\fml{S}\cup\{n-1\};\fml{M},\mbf{v}_{1..n}) 
= - (\phi(\fml{S}'\cup\{n-1,n\};\fml{M},\mbf{v}_{1..n}) - \phi(\fml{S}'\cup\{n-1\};\fml{M},\mbf{v}_{1..n}))$.
\item Consider any subset $\fml{S}\subseteq\fml{F}'$, then:
\begin{align}
&\phi(\fml{S}\cup\{n\};\fml{M},\mbf{v}_{1..n}) - \phi(\fml{S};\fml{M},\mbf{v}_{1..n}) \\ \nonumber
&= \left( \frac{1}{2^{|\fml{F}'\setminus\fml{S}|+1}}\sum_{\mbf{x}'\in\Upsilon(\fml{S}\cup\{n\};\mbf{v}_{1..n})}\kappa(\mbf{x}') \right)
- \left( \frac{1}{2^{|\fml{F}'\setminus\fml{S}|+2}}\sum_{\mbf{x}'\in\Upsilon(\fml{S};\mbf{v}_{1..n})}\kappa(\mbf{x}') \right) \\ \nonumber
&= \frac{1}{2^{|\fml{F}'\setminus\fml{S}|+1}} \left( \sum_{\mbf{x}'\in\Upsilon(\fml{S}\cup\{n\};(\mbf{u},\mbf{w},1,1))}\kappa(\mbf{x}')
+ \sum_{\mbf{x}'\in\Upsilon(\fml{S}\cup\{n\};(\mbf{u},\mbf{w},0,1))}\kappa(\mbf{x}') \right) \\ \nonumber
&- \frac{1}{2^{|\fml{F}'\setminus\fml{S}|+2}} \left( \sum_{\mbf{x}'\in\Upsilon(\fml{S};(\mbf{u},\mbf{w},1,1))}\kappa(\mbf{x}')
+ \sum_{\mbf{x}'\in\Upsilon(\fml{S};(\mbf{u},\mbf{w},0,1))}\kappa(\mbf{x}') \right) \\ \nonumber
&- \frac{1}{2^{|\fml{F}'\setminus\fml{S}|+2}} \left( \sum_{\mbf{x}'\in\Upsilon(\fml{S};(\mbf{u},\mbf{w},1,0))}\kappa(\mbf{x}') + 
\sum_{\mbf{x}'\in\Upsilon(\fml{S};(\mbf{u},\mbf{w},0,0))}\kappa(\mbf{x}') \right) \\ \nonumber
&= \frac{1}{4} \times \frac{1}{2^{|\fml{F}'\setminus\fml{S}|}} 
\left( 2^{|\fml{F}_1\setminus\fml{S}_{1}|} \times \sum_{\mbf{z}\in\Upsilon(\fml{S}_2;\mbf{w})}\kappa_2(\mbf{z})
- 2^{|\fml{F}_2\setminus\fml{S}_{2}|} \times \sum_{\mbf{y}\in\Upsilon(\fml{S}_1;\mbf{u})}\kappa_1(\mbf{y})\right)
\end{align}
Likewise,
we can find a unique subset $\fml{S}'$ to cancel the effect of $\phi(\fml{S}\cup\{n\};\fml{M},\mbf{v}_{1..n}) - \phi(\fml{S};\fml{M},\mbf{v}{1..n})$.
\end{enumerate}
Therefore, $\sv(n) = 0$.
To prove that the feature $n$ is relevant, we compute an AXp containing the feature $n$.
First, we free all features in $\fml{F}_1$ and the feature $n-1$, while keeping all features in $\fml{F}_2$ and the feature $n$.
This makes feature $n$ critical to the change in the prediction of $\fml{M}$.
Next, we compute an AXp $\fml{X}$ of $\fml{M}_2$ under the point $\mbf{v}_{m+1..2m}$.
Finally, we can conclude that $\fml{X} \cup \{n\}$ is an AXp of $\fml{M}$ under the point $\mbf{v}_{1..n}$.
\qedhere
\end{proof}

\PropHighest*

\begin{proof}
Given a classifier $\fml{M}_1$ implementing a non-constant boolean function $\kappa_1$
defined on the set of variables $\fml{F}' = \{1, \dots, m\}$ where $m \ge 3$, and
satisfies the following conditions:
\begin{enumerate}
\item
$\kappa_1$ predicts a specific point $\mbf{v}_{1..m}$ as 0. Furthermore,
for any point $\mbf{x}_{1..m}$ such that $d_{H}(\mbf{x}_{1..m}, \mbf{v}_{1..m}) = 1$,
where $d_{H}(\cdot)$ denotes the Hamming distance, we have $\kappa_1(\mbf{x}_{1..m}) = 1$.
\item
$\kappa_1$ predicts all the other points as 0.
\end{enumerate}
For example, $\kappa_1$ can be the function $\sum_{i=1}^{m}\neg{x_1}=1$,
which predicts the point $\mbf{1}_{1..m}$ as 0 and all points around this point with a Hamming distance of 1 as 1.
Based on $\kappa_1$, we can build a new classifier $\fml{M}$ characterized by the boolean function defined as follows:
\begin{equation}
\kappa(x_1,\dots,x_m, x_n) :=
\left \{
\begin{array}{lcl}
0 & \quad & \tn{if~$x_n=0$}\\
\kappa_1(x_1,\dots,x_m) & \quad & \tn{if~$x_n=1$}\\[3pt]
\end{array}
\right.
\end{equation}
Select the $m$-dimensional point $\mbf{v}_{1..m}$ from our Hamming ball such that $\kappa_1(\mbf{v}_{1..m}) = 0$
(note that only one such point exists),
and extend $\mbf{v}_{1..m}$ with $v_n=1$.
Then for the $n$-dimensional point $\mbf{v}_{1..n} = (\mbf{v}_{1..m}, 1)$, we have $\kappa(\mbf{v}_{1..n}) = 0$.
Applying the same reasoning presented in the proof of \cref{prop:irr}, we can deduce that feature $n$ is irrelevant.

For simplicity, we will use $\mbf{x}'$ to denote an arbitrary $n$-dimensional point $\mbf{x}_{1..n}$,
and $\mbf{y}$ to denote an arbitrary $m$-dimensional point $\mbf{x}_{1..m}$.
More importantly, for $\kappa_1$ and any subset $\fml{S}\subseteq\fml{F}'$, we have:
\[
\sum_{\mbf{y}\in\Upsilon(\fml{S};\mbf{v}_{1..m})}\kappa_1(\mbf{y}) = m-|\fml{S}|
\]
\begin{enumerate}
\item
For the feature $n$ and an arbitrary subset $\fml{S} \subseteq \fml{F}'$, we have:
\begin{align}
&\phi(\fml{S}\cup\{n\};\fml{M},\mbf{v}_{1..n}) - \phi(\fml{S};\fml{M},\mbf{v}_{1..n}) \\ \nonumber
&= \frac{1}{2^{|(\fml{F}'\cup\{n\})\setminus(\fml{S}\cup\{n\})|}} \sum_{\mbf{x}'\in\Upsilon(\fml{S}\cup\{n\};\mbf{v}_{1..n})}\kappa(\mbf{x}')
- \frac{1}{2^{|(\fml{F}'\cup\{n\})\setminus\fml{S}|}} \sum_{\mbf{x}'\in\Upsilon(\fml{S};\mbf{v}_{1..n})}\kappa(\mbf{x}') \\ \nonumber
&= \frac{1}{2^{|\fml{F}'\setminus\fml{S}|}} \sum_{\mbf{x}'\in\Upsilon(\fml{S}\cup\{n\};\mbf{v}_{1..n})}\kappa(\mbf{x}')
- \frac{1}{2^{|\fml{F}'\setminus\fml{S}|+1}} \sum_{\mbf{x}'\in\Upsilon(\fml{S};\mbf{v}_{1..n})}\kappa(\mbf{x}') \\ \nonumber
&= \frac{1}{2} \times \frac{1}{2^{|\fml{F}'\setminus\fml{S}|}} \sum_{\mbf{y}\in\Upsilon(\fml{S};\mbf{v}_{1..m})}\kappa_1(\mbf{y}) \\ \nonumber
&= \frac{1}{2} \phi(\fml{S};\fml{M}_1,\mbf{v}_{1..m}) \\ \nonumber
&= \frac{1}{2} \times \frac{m-|\fml{S}|}{2^{m-|\fml{S}|}}
\end{align}
This means $\sv(n) > 0$.
Besides, the unique minimal value of $\phi(\fml{S}\cup\{n\};\fml{M},\mbf{v}_{1..n}) - \phi(\fml{S};\fml{M},\mbf{v}_{1..n})$ is 0
when $\fml{S} = \fml{F}'$.
\item
For a feature $j \neq n$, consider an arbitrary subset $\fml{S}\subseteq\fml{F}'\setminus\{j\}$ and the feature $n$, we have:
\begin{align}
&\phi(\fml{S}\cup\{j,n\};\fml{M},\mbf{v}_{1..n}) - \phi(\fml{S}\cup\{n\};\fml{M},\mbf{v}_{1..n}) \\ \nonumber
&= \frac{1}{2^{|(\fml{F}'\cup\{n\})\setminus(\fml{S}\cup\{j,n\})|}} \sum_{\mbf{x}'\in\Upsilon(\fml{S}\cup\{j,n\};\mbf{v}_{1..n})}\kappa(\mbf{x}')
- \frac{1}{2^{|(\fml{F}'\cup\{n\})\setminus(\fml{S}\cup\{n\})|}} \sum_{\mbf{x}'\in\Upsilon(\fml{S}\cup\{n\};\mbf{v}_{1..n})}\kappa(\mbf{x}') \\ \nonumber
&= \frac{1}{2^{|\fml{F}'\setminus(\fml{S}\cup\{j\})|}} \sum_{\mbf{y}\in\Upsilon(\fml{S}\cup\{j\};\mbf{v}_{1..m})}\kappa_1(\mbf{y})
- \frac{1}{2^{|\fml{F}'\setminus\fml{S}|}} \sum_{\mbf{y}\in\Upsilon(\fml{S};\mbf{v}_{1..m})}\kappa_1(\mbf{y}) \\ \nonumber
&= \phi(\fml{S}\cup\{j\};\fml{M}_1,\mbf{v}_{1..m}) - \phi(\fml{S};\fml{M}_1,\mbf{v}_{1..m}) \\ \nonumber
&= \frac{m-|\fml{S}|-1}{2^{m-|\fml{S}|-1}} - \frac{m-|\fml{S}|}{2^{m-|\fml{S}|}} \\ \nonumber
&= \frac{m-|\fml{S}|-2}{2^{m-|\fml{S}|}}
\end{align}
In this case, 
$\phi(\fml{S}\cup\{j,n\};\fml{M},\mbf{v}_{1..n}) - \phi(\fml{S}\cup\{n\};\fml{M},\mbf{v}_{1..n}) = -\frac{1}{2}$ if $|\fml{S}| = m-1$, which is its unique minimal value.
$\phi(\fml{S}\cup\{j,n\};\fml{M},\mbf{v}_{1..n}) - \phi(\fml{S}\cup\{n\};\fml{M},\mbf{v}_{1..n}) = 0$ if $|\fml{S}| = m-2$,
and $\phi(\fml{S}\cup\{j,n\};\fml{M},\mbf{v}_{1..n}) - \phi(\fml{S}\cup\{n\};\fml{M},\mbf{v}_{1..n}) > 0$ if $|\fml{S}| < m-2$.
\item
Moreover, for a feature $j \neq n$, consider an arbitrary subset $\fml{S}\subseteq\fml{F}'\setminus\{j\}$ and without the feature $n$, we have:
\begin{align}
&\phi(\fml{S}\cup\{j\};\fml{M},\mbf{v}_{1..n}) - \phi(\fml{S};\fml{M},\mbf{v}_{1..n}) \\ \nonumber
&= \frac{1}{2^{|(\fml{F}'\cup\{n\})\setminus(\fml{S}\cup\{j\})|}} \sum_{\mbf{x}'\in\Upsilon(\fml{S}\cup\{j\};\mbf{v}_{1..n})}\kappa(\mbf{x}')
- \frac{1}{2^{|(\fml{F}'\cup\{n\})\setminus\fml{S}|}} \sum_{\mbf{x}'\in\Upsilon(\fml{S};\mbf{v}_{1..n})}\kappa(\mbf{x}') \\ \nonumber
&= \frac{1}{2^{|\fml{F}'\setminus(\fml{S}\cup\{j\})|+1}} \sum_{\mbf{y}\in\Upsilon(\fml{S}\cup\{j\};\mbf{v}_{1..m})}\kappa_1(\mbf{y})
- \frac{1}{2^{|\fml{F}'\setminus\fml{S}|+1}} \sum_{\mbf{y}\in\Upsilon(\fml{S};\mbf{v}_{1..m})}\kappa_1(\mbf{y}) \\ \nonumber
&= \frac{1}{2} (\phi(\fml{S}\cup\{j\};\fml{M}_1,\mbf{v}_{1..m}) - \phi(\fml{S};\fml{M}_1,\mbf{v}_{1..m})) \\ \nonumber
&= \frac{1}{2} \times \frac{m-|\fml{S}|-2}{2^{m-|\fml{S}|}}
\end{align}
In this case, $\phi(\fml{S}\cup\{j\};\fml{M},\mbf{v}_{1..n}) - \phi(\fml{S};\fml{M},\mbf{v}_{1..n}) = -\frac{1}{4}$ if $|\fml{S}| = m-1$, which is its unique minimal value.
$\phi(\fml{S}\cup\{j\};\fml{M},\mbf{v}_{1..n}) - \phi(\fml{S};\fml{M},\mbf{v}_{1..n}) = 0$ if $|\fml{S}| = m-2$,
and $\phi(\fml{S}\cup\{j\};\fml{M},\mbf{v}_{1..n}) - \phi(\fml{S};\fml{M},\mbf{v}_{1..n}) > 0$ if $|\fml{S}| < m-2$.
\end{enumerate}

Next, we prove $|\sv(n)| > |\sv(j)|$ by showing
$\sv(n) + \sv(j) > 0$ and $\sv(n) - \sv(j) > 0$.
Note that $\sv(n) > 0$.
Additionally, $\phi(\fml{S}\cup\{j,n\};\fml{M},\mbf{v}_{1..n}) - \phi(\fml{S}\cup\{n\};\fml{M},\mbf{v}_{1..n}) < 0$
and $\phi(\fml{S}\cup\{j\};\fml{M},\mbf{v}_{1..n}) - \phi(\fml{S};\fml{M},\mbf{v}_{1..n}) < 0$
only when $|\fml{S}| = m-1$.
\begin{enumerate}
\item
For $\sv(n)$:
\begin{align}
\sv(n)
&= \sum_{\fml{S}\subseteq \fml{F}\setminus\{n\}}\frac{|\fml{S}|!(m-|\fml{S}|)!}{(m+1)!} \times 
(\phi(\fml{S}\cup\{n\};\fml{M},\mbf{v}_{1..n}) - \phi(\fml{S};\fml{M},\mbf{v}_{1..n})) \\ \nonumber
&= \sum_{\fml{S}\subseteq \fml{F}\setminus\{n\}}\frac{|\fml{S}|!(m-|\fml{S}|)!}{(m+1)!} \times \frac{1}{2} \phi(\fml{S};\fml{M}_1,\mbf{v}_{1..m}) \\ \nonumber
&= \frac{1}{2} \times \frac{1}{m+1} \times \sum_{\fml{S}\subseteq \fml{F}\setminus\{n\}}\frac{|\fml{S}|!(m-|\fml{S}|)!}{m!} \phi(\fml{S};\fml{M}_1,\mbf{v}_{1..m}) \\ \nonumber
&= \frac{1}{2} \times \frac{1}{m+1} \times \sum_{\fml{S}\subseteq \fml{F}\setminus\{n\}}\frac{|\fml{S}|!(m-|\fml{S}|)!}{m!} 
\times \frac{m-|\fml{S}|}{2^{m-|\fml{S}|}} \\ \nonumber
&= \frac{1}{2} \times \frac{1}{m+1} \times \sum_{0 \le |\fml{S}| \le m}\frac{|\fml{S}|!(m-|\fml{S}|)!}{m!} \times \frac{m!}{|\fml{S}|!(m-|\fml{S}|)!}
\times \frac{m-|\fml{S}|}{2^{m-|\fml{S}|}} \\ \nonumber
&= \frac{1}{2} \times \frac{1}{m+1} \times \sum_{0 \le |\fml{S}| \le m}\frac{m-|\fml{S}|}{2^{m-|\fml{S}|}} 
= \frac{1}{2} \times \frac{1}{m+1} \times \sum^{m}_{k=1}\frac{k}{2^{k}} \\ \nonumber
&= \frac{1}{2} \times \frac{1}{m+1} \times \frac{2^{m+1} - m - 2}{2^{m}} = \frac{1}{m+1} \times \frac{2^{m+1} - m - 2}{2^{m+1}} 
\end{align}
\item
For a feature $j \neq n$, consider the subset $\fml{S}=\fml{F}'\setminus\{j\}$ where $|\fml{S}| = m-1$ and the feature $n$:
\begin{align}
&\frac{|\fml{S}\cup\{n\}|!(m-|\fml{S}\cup\{n\}|)!}{(m+1)!} \times \frac{m-|\fml{S}|-2}{2^{m-|\fml{S}|}} \\ \nonumber
&= \frac{m!(m-m)!}{(m+1)!} \times \frac{m-(m-1)-2}{2^{m-(m-1)}} \\ \nonumber
&= -\frac{1}{2} \times \frac{1}{m+1}
\end{align}
\item
For a feature $j \neq n$, consider the subset $\fml{S}=\fml{F}'\setminus\{j\}$ where $|\fml{S}| = m-1$ and without the feature $n$:
\begin{align}
&\frac{|\fml{S}|!(m-|\fml{S}|)!}{(m+1)!} \times \frac{1}{2} \times \frac{m-|\fml{S}|-2}{2^{m-|\fml{S}|}} \\ \nonumber
&=  \frac{1}{2} \times \frac{(m-1)!(m-(m-1))!}{(m+1)!} \times \frac{m-(m-1)-2}{2^{m-(m-1)}} \\ \nonumber
&= -\frac{1}{4} \times \frac{1}{m(m+1)}
\end{align}
\end{enumerate}
We consider the sum of these three values:
\begin{align}
&\frac{1}{m+1} \times \frac{2^{m+1} - m - 2}{2^{m+1}} -\frac{1}{2} \times \frac{1}{m+1} -\frac{1}{4} \times \frac{1}{m(m+1)} \\ \nonumber
&= \frac{1}{m+1} \times \left( \frac{(2^{m+1} - m - 2)m}{m2^{m+1}} -\frac{m2^m}{m2^{m+1}} - \frac{2^{m-1}}{m2^{m+1}} \right) \\ \nonumber
&= \frac{1}{m(m+1)2^{m+1}} \times \left( (m - \frac{1}{2})2^m - m^2 - 2m \right)
\end{align}
Since $m \ge 3$, the sum of these three values is always greater than 0.
Thus, we can conclude that $\sv(n) + \sv(j) > 0$.

To show $\sv(n) - \sv(j) > 0$, we focus on all subsets $\fml{S}\subseteq\fml{F}'$ where $|\fml{S}| < m-2$.
This is because, as previously stated,
$\phi(\fml{S}\cup\{j,n\};\fml{M},\mbf{v}_{1..n}) - \phi(\fml{S}\cup\{n\};\fml{M},\mbf{v}_{1..n}) \le 0$
and $\phi(\fml{S}\cup\{j\};\fml{M},\mbf{v}_{1..n}) - \phi(\fml{S};\fml{M},\mbf{v}_{1..n}) \le 0$ if $|\fml{S}| \ge m-2$.

Moreover, for all subsets $\fml{S}\subseteq\fml{F}'$ where $|\fml{S}|=k$ where $0 < k \le m-3$, we compute the following three quantities:
\[
Q_1 := \sum_{\fml{S}\subseteq \fml{F}', |\fml{S}|=k} \phi(\fml{S}\cup\{n\};\fml{M},\mbf{v}_{1..n}) - \phi(\fml{S};\fml{M},\mbf{v}_{1..n})
\]
\[
Q_2 := \sum_{\fml{S}\subseteq \fml{F}'\setminus\{j\}, |\fml{S}|=k-1} \phi(\fml{S}\cup\{j,n\};\fml{M},\mbf{v}_{1..n}) - \phi(\fml{S}\cup\{n\};\fml{M},\mbf{v}_{1..n})
\]
\[
Q_3 := \sum_{\fml{S}\subseteq \fml{F}'\setminus\{j\}, |\fml{S}|=k} \phi(\fml{S}\cup\{j\};\fml{M},\mbf{v}_{1..n}) - \phi(\fml{S};\fml{M},\mbf{v}_{1..n})
\]
and show that $Q_1 - Q_2 - Q_3 > 0$.
Note that $Q_1$, $Q_2$ and $Q_3$ share the same coefficient $\frac{k!(n-k-1)!}{n!}$.
\begin{enumerate}
\item
For the feature $n$, we pick all possible subsets $\fml{S}\subseteq\fml{F}'$ where $|\fml{S}| = k$, which implies $|\fml{S} \cup \{n\}| = k+1$, then:
\[
Q_1 = \binom{m}{|\fml{S}|} \times \frac{1}{2} \times \frac{m-|\fml{S}|}{2^{m-|\fml{S}|}} 
= \binom{m}{k} \times \frac{1}{2} \times \frac{m-k}{2^{m-k}}
\]
\item
For a feature $j \neq n$ and consider the feature $n$, we pick all possible subsets $\fml{S}\subseteq\fml{F}'$ where $|\fml{S}| = k-1$, which implies $|\fml{S} \cup \{j, n\}| = k+1$, then:
\[
Q_2 = \binom{m-1}{|\fml{S}|} \times \frac{m-|\fml{S}|-2}{2^{m-|\fml{S}|}} 
= \binom{m-1}{k-1} \times \frac{m-(k-1)-2}{2^{m-(k-1)}}
= \binom{m-1}{k-1} \times \frac{1}{2} \times \frac{m-k-1}{2^{m-k}}
\]
\item
For a feature $j \neq n$, without considering the feature $n$, we pick all possible subsets $\fml{S}\subseteq\fml{F}'$ where $|\fml{S}| = k$, which implies $|\fml{S} \cup \{j\}| = k+1$, then:
\[
Q_3 = \binom{m-1}{|\fml{S}|} \times \frac{1}{2} \times \frac{m-|\fml{S}|-2}{2^{m-|\fml{S}|}} 
= \binom{m-1}{k} \times \frac{1}{2} \times \frac{m-k-2}{2^{m-k}}
\]
\end{enumerate}
Then we compute $Q_1 - Q_2 - Q_3$:
\begin{align}
&\binom{m}{k} \times \frac{1}{2} \times \frac{m-k}{2^{m-k}} - \binom{m-1}{k-1} \times \frac{1}{2} \times \frac{m-k-1}{2^{m-k}}
- \binom{m-1}{k} \times \frac{1}{2} \times \frac{m-k-2}{2^{m-k}} \\ \nonumber
&= \frac{1}{2} \times \frac{1}{2^{m-k}} 
\left[ \binom{m}{k} (m-k) - \binom{m-1}{k-1} (m - k - 1) - \binom{m-1}{k} (m - k - 2) \right] \\ \nonumber
&= \frac{1}{2} \times \frac{1}{2^{m-k}} 
\left[ \binom{m}{k} (m-k) - \binom{m-1}{k-1} (m - k) - \binom{m-1}{k} (m - k) + \binom{m-1}{k-1} + 2 \binom{m-1}{k} \right] \\ \nonumber
&= \frac{1}{2} \times \frac{1}{2^{m-k}} \left[ \binom{m-1}{k-1} + 2 \binom{m-1}{k} \right]
\end{align}
This means that $\sv(n) - \sv(j) > 0$.
Hence, we can conclude that $|\sv(n)| > |\sv(j)|$.
\qedhere
\end{proof}

\PropDisorderII*

\begin{proof}
Consider three classifiers $\fml{M}_1$, $\fml{M}_2$ and $\fml{M}_3$ implementing
non-constant boolean functions $\kappa_1$, $\kappa_2$ and $\kappa_3$, respectively.
Actually it is possible for $\kappa_1$ to be the constant function 0.
All of them are defined on the set of features $\fml{F}' = \{1, \dots, m\}$ where $m \ge 2$.
More importantly, $\kappa_1$, $\kappa_2$ and $\kappa_3$ satisfy the following conditions:
\begin{enumerate}
\item
$\kappa_2$ is a function predicting exactly one point $\mbf{v}_{1..m}$ to 1, for example, $\kappa_2$ can be $\bigland_{1 \le i \le m} \neg x_i$.
\item
For the point $\mbf{v}_{1..m}$ where $\kappa_2$ predicts 1, we have $\kappa_3(\mbf{v}_{1..m}) = 0$.
This implies $\kappa_2 \land \kappa_3 \models \bot$, that is, the conjunction of $\kappa_2$ and $\kappa_3$ is logically inconsistent.
\item
For any point $\mbf{x}_{1..m}$ such that $d_{H}(\mbf{x}_{1..m}, \mbf{v}_{1..m}) = 1$, 
where $d_{H}(\cdot)$ denotes the Hamming distance, we have $\kappa_3(\mbf{x}_{1..m}) = 1$.
\item
$\kappa_1 \land \kappa_2 \models \bot$ and $\kappa_1 \land \kappa_3 \models \bot$, 
indicating that the conjunction of $\kappa_1$ and $\kappa_2$ as well as the conjunction of $\kappa_1$ and $\kappa_3$ 
both equal to the constant function 0.
\item
$\kappa_1 \lor \kappa_2 \neq 1$ and $\kappa_1 \lor \kappa_3 \neq 1$,
indicating that neither the disjunction of $\kappa_1$ and $\kappa_2$ nor the disjunction of $\kappa_1$ and $\kappa_3$ 
equals the constant function 1.
\end{enumerate}
Let $\fml{F} = \fml{F}' \cup \{n-1, n\}$, we can build a new classifier $\fml{M}$ from $\fml{M}_1$, $\fml{M}_2$ and $\fml{M}_3$.
$\fml{M}$ is characterized by the boolean function defined as follows:
\begin{equation}
\kappa(\mbf{x}_{1..m},x_{n-1},x_{n}) :=
\left \{
\begin{array}{lcl}
\kappa_1(\mbf{x}_{1..m}) & \quad & \tn{if~$x_{n-1}=0$}\\[3pt]
\kappa_1(\mbf{x}_{1..m}) \lor \kappa_2(\mbf{x}_{1..m}) & \quad & \tn{if~$x_{n-1}=1 \land x_n = 0$}\\[3pt]
\kappa_1(\mbf{x}_{1..m}) \lor \kappa_3(\mbf{x}_{1..m}) & \quad & \tn{if~$x_{n-1} = 1 \land x_n = 1$}\\
\end{array}
\right.
\end{equation}
Besides, we can derive that
$(\neg x_n \land (\kappa_1 \lor \kappa_2)) \lor (x_n \land (\kappa_1 \lor \kappa_3)) = \kappa_1 \lor (\neg x_n \land \kappa_2) \lor (x_n \land \kappa_3)$.
So we have $\kappa_1 \models \kappa_1 \lor (\neg x_n \land \kappa_2) \lor (x_n \land \kappa_3)$.
Choose the $m$-dimensional point $\mbf{v}_{1..m}$ such that
$\kappa_1(\mbf{v}_{1..m}) = \kappa_3(\mbf{v}_{1..m}) = 0$ but $\kappa_2(\mbf{v}_{1..m}) = 1$.
Extend $\mbf{v}_{1..m}$ with $v_{n-1} = v_n=1$, 
let $\mbf{v}_{1..n} = (\mbf{v}_{1..m}, 1, 1)$ be the $n$-dimensional point, it follows that $\kappa(\mbf{v}_{1..n}) = 0$.
Based on the proof of \cref{prop:irr}, feature $n-1$ is irrelevant.
To prove that feature $n$ is relevant, we assume the contrary, i.e., that $n$ is irrelevant.
In this case, we pick the point $\mbf{v}' = (\mbf{v}_{1..m}, 1, 0)$ from the feature space where $\kappa_2(\mbf{v}_{1..m}) = 1$.
Clearly, for this point we have $\kappa(\mbf{v}') = 1$, leading to a contradiction.
Thus, feature $n$ is relevant.

In the following, we prove that $|\sv(n-1)| > |\sv(n)|$ by showing that $\sv(n-1) - \sv(n) > 0$ and $\sv(n-1) + \sv(n) > 0$.
To simplify the notations, we will use $\mbf{x}'$ to denote an arbitrary $n$-dimensional point $\mbf{x}_{1..n}$,
and $\mbf{y}$ to denote an arbitrary $m$-dimensional point $\mbf{x}_{1..m}$. 
For any subset $\fml{S} \subseteq \fml{F}'$, we now focus on feature $n-1$.
\begin{enumerate}
\item For the feature $n-1$, consider an arbitrary subset $\fml{S}\subseteq\fml{F}'$ and without the feature $n$, then:
\begin{align}
&\phi(\fml{S}\cup\{n-1\};\fml{M},\mbf{v}_{1..n}) - \phi(\fml{S};\fml{M},\mbf{v}_{1..n}) \\ \nonumber
&= \left( \frac{1}{2^{|\fml{F}'\setminus\fml{S}|+1}}\sum_{\mbf{x}'\in\Upsilon(\fml{S}\cup\{n-1\};\mbf{v}_{1..n})}\kappa(\mbf{x}') \right)
- \left( \frac{1}{2^{|\fml{F}'\setminus\fml{S}|+2}}\sum_{\mbf{x}'\in\Upsilon(\fml{S};\mbf{v}_{1..n})}\kappa(\mbf{x}') \right) \\ \nonumber
&= \frac{1}{2^{|\fml{F}'\setminus\fml{S}|+1}} \left( \sum_{\mbf{y}\in\Upsilon(\fml{S};\mbf{v}_{1..m})} (\kappa_1(\mbf{y}) \lor \kappa_3(\mbf{y}))
+ \sum_{\mbf{y}\in\Upsilon(\fml{S};\mbf{v}_{1..m})}(\kappa_1(\mbf{y}) \lor \kappa_2(\mbf{y})) \right) \\ \nonumber
&- \frac{1}{2^{|\fml{F}'\setminus\fml{S}|+2}} \left( \sum_{\mbf{y}\in\Upsilon(\fml{S};\mbf{v}_{1..m})} (\kappa_1(\mbf{y}) \lor \kappa_3(\mbf{y}))
+ \sum_{\mbf{y}\in\Upsilon(\fml{S};\mbf{v}_{1..m})}(\kappa_1(\mbf{y}) \lor \kappa_2(\mbf{y})) \right) \\ \nonumber
&- \frac{1}{2^{|\fml{F}'\setminus\fml{S}|+2}} \left( \sum_{\mbf{y}\in\Upsilon(\fml{S};\mbf{v}_{1..m})} \kappa_1(\mbf{y}) + 
\sum_{\mbf{y}\in\Upsilon(\fml{S};\mbf{v}_{1..m})} \kappa_1(\mbf{y}) \right) \\ \nonumber
&= \frac{1}{2^{|\fml{F}'\setminus\fml{S}|+2}} \left( \sum_{\mbf{y}\in\Upsilon(\fml{S};\mbf{v}_{1..m})} (\kappa_1(\mbf{y}) \lor \kappa_3(\mbf{y}))
+ \sum_{\mbf{y}\in\Upsilon(\fml{S};\mbf{v}_{1..m})}(\kappa_1(\mbf{y}) \lor \kappa_2(\mbf{y})) \right) \\ \nonumber
&- \frac{1}{2^{|\fml{F}'\setminus\fml{S}|+2}} \left( \sum_{\mbf{y}\in\Upsilon(\fml{S};\mbf{v}_{1..m})} \kappa_1(\mbf{y}) +
\sum_{\mbf{y}\in\Upsilon(\fml{S};\mbf{v}_{1..m})} \kappa_1(\mbf{y}) \right) \\ \nonumber
&= \frac{1}{2^{|\fml{F}'\setminus\fml{S}|+2}} \left( \sum_{\mbf{y}\in\Upsilon(\fml{S};\mbf{v}_{1..m})} \kappa_3(\mbf{y})
+ \sum_{\mbf{y}\in\Upsilon(\fml{S};\mbf{v}_{1..m})} \kappa_2(\mbf{y}) \right)
\end{align}
\item For the feature $n-1$, consider an arbitrary subset $\fml{S} \cup \{n\}$, then:
\begin{align}
&\phi(\fml{S}\cup\{n-1,n\};\fml{M},\mbf{v}_{1..n}) - \phi(\fml{S}\cup\{n\};\fml{M},\mbf{v}_{1..n}) \\ \nonumber
&= \left( \frac{1}{2^{|\fml{F}'\setminus\fml{S}|}}\sum_{\mbf{x}'\in\Upsilon(\fml{S}\cup\{n-1,n\};\mbf{v}_{1..n})}\kappa(\mbf{x}') \right)
- \left( \frac{1}{2^{|\fml{F}'\setminus\fml{S}|+1}}\sum_{\mbf{x}'\in\Upsilon(\fml{S}\cup\{n\};\mbf{v}_{1..n})}\kappa(\mbf{x}') \right) \\ \nonumber
&= \frac{1}{2^{|\fml{F}'\setminus\fml{S}|+1}} 
\left( \sum_{\mbf{y}\in\Upsilon(\fml{S};\mbf{v}_{1..m})} (\kappa_1(\mbf{y}) \lor \kappa_3(\mbf{y}))
- \sum_{\mbf{y}\in\Upsilon(\fml{S};\mbf{v}_{1..m})}\kappa_1(\mbf{y}) \right) \\ \nonumber
&= \frac{1}{2^{|\fml{F}'\setminus\fml{S}|+1}} \left( \sum_{\mbf{y}\in\Upsilon(\fml{S};\mbf{v}_{1..m})} \kappa_3(\mbf{y}) \right)
\end{align}
\end{enumerate}
Thus, we can conclude that $\sv(n-1) > 0$.
For any subset $\fml{S} \subseteq \fml{F}'$, we now focus on the feature $n$.
\begin{enumerate}
\item For the feature $n$, consider an arbitrary subset $\fml{S}\subseteq\fml{F}'$ and without the feature $n-1$, then:
\begin{align}
&\phi(\fml{S}\cup\{n\};\fml{M},\mbf{v}_{1..n}) - \phi(\fml{S};\fml{M},\mbf{v}_{1..n}) \\ \nonumber
&= \left( \frac{1}{2^{|\fml{F}'\setminus\fml{S}|+1}}\sum_{\mbf{x}'\in\Upsilon(\fml{S}\cup\{n\};\mbf{v}_{1..n})}\kappa(\mbf{x}') \right)
- \left( \frac{1}{2^{|\fml{F}'\setminus\fml{S}|+2}}\sum_{\mbf{x}'\in\Upsilon(\fml{S};\mbf{v}_{1..n})}\kappa(\mbf{x}') \right) \\ \nonumber
&= \frac{1}{2^{|\fml{F}'\setminus\fml{S}|+1}} \left( \sum_{\mbf{y}\in\Upsilon(\fml{S};\mbf{v}_{1..m})} (\kappa_1(\mbf{y}) \lor \kappa_3(\mbf{y}))
+ \sum_{\mbf{y}\in\Upsilon(\fml{S};\mbf{v}_{1..m})} \kappa_1(\mbf{y}) \right) \\ \nonumber
&- \frac{1}{2^{|\fml{F}'\setminus\fml{S}|+2}} \left( \sum_{\mbf{y}\in\Upsilon(\fml{S};\mbf{v}_{1..m})} (\kappa_1(\mbf{y}) \lor \kappa_3(\mbf{y}))
+ \sum_{\mbf{y}\in\Upsilon(\fml{S};\mbf{v}_{1..m})} \kappa_1(\mbf{y}) \right) \\ \nonumber
&- \frac{1}{2^{|\fml{F}'\setminus\fml{S}|+2}} \left( \sum_{\mbf{y}\in\Upsilon(\fml{S};\mbf{v}_{1..m})} (\kappa_1(\mbf{y}) \lor \kappa_2(\mbf{y})) + 
\sum_{\mbf{y}\in\Upsilon(\fml{S};\mbf{v}_{1..m})} \kappa_1(\mbf{y}) \right) \\ \nonumber
&= \frac{1}{2^{|\fml{F}'\setminus\fml{S}|+2}} 
\left( \sum_{\mbf{y}\in\Upsilon(\fml{S};\mbf{v}_{1..m})} (\kappa_1(\mbf{y}) \lor \kappa_3(\mbf{y}))
- \sum_{\mbf{y}\in\Upsilon(\fml{S};\mbf{v}_{1..m})}(\kappa_1(\mbf{y}) \lor \kappa_2(\mbf{y}))\right) \\ \nonumber
&= \frac{1}{2^{|\fml{F}'\setminus\fml{S}|+2}} 
\left( \sum_{\mbf{y}\in\Upsilon(\fml{S};\mbf{v}_{1..m})} \kappa_3(\mbf{y})
- \sum_{\mbf{y}\in\Upsilon(\fml{S};\mbf{v}_{1..m})} \kappa_2(\mbf{y}) \right)
\end{align}
\item For the feature $n$, consider an arbitrary subset $\fml{S} \cup \{n-1\}$, then:
\begin{align}
&\phi(\fml{S}\cup\{n-1,n\};\fml{M},\mbf{v}_{1..n}) - \phi(\fml{S}\cup\{n-1\};\fml{M},\mbf{v}_{1..n}) \\ \nonumber
&= \left( \frac{1}{2^{|\fml{F}'\setminus\fml{S}|}}\sum_{\mbf{x}'\in\Upsilon(\fml{S}\cup\{n-1,n\};\mbf{v}_{1..n})}\kappa(\mbf{x}') \right)
- \left( \frac{1}{2^{|\fml{F}'\setminus\fml{S}|+1}}\sum_{\mbf{x}'\in\Upsilon(\fml{S}\cup\{n-1\};\mbf{v}_{1..n})}\kappa(\mbf{x}') \right) \\ \nonumber
&= \frac{1}{2^{|\fml{F}'\setminus\fml{S}|+1}} 
\left( \sum_{\mbf{y}\in\Upsilon(\fml{S};\mbf{v}_{1..m})} (\kappa_1(\mbf{y}) \lor \kappa_3(\mbf{y}))
- \sum_{\mbf{y}\in\Upsilon(\fml{S};\mbf{v}_{1..m})} (\kappa_1(\mbf{y}) \lor \kappa_2(\mbf{y})) \right) \\ \nonumber
&= \frac{1}{2^{|\fml{F}'\setminus\fml{S}|+1}} 
\left( \sum_{\mbf{y}\in\Upsilon(\fml{S};\mbf{v}_{1..m})} \kappa_3(\mbf{y})
- \sum_{\mbf{y}\in\Upsilon(\fml{S};\mbf{v}_{1..m})} \kappa_2(\mbf{y}) \right)
\end{align}
\end{enumerate}
Note that $\phi(\fml{S}\cup\{n\};\fml{M},\mbf{v}_{1..n}) - \phi(\fml{S};\fml{M},\mbf{v}_{1..n}) < 0$
and $\phi(\fml{S}\cup\{n-1,n\};\fml{M},\mbf{v}_{1..n}) - \phi(\fml{S}\cup\{n-1\};\fml{M},\mbf{v}_{1..n}) < 0$ if and only if $\fml{S} = \fml{F}'$.
For any other proper subset $\fml{S} \subset \fml{F}'$, $\phi(\fml{S}\cup\{n\};\fml{M},\mbf{v}_{1..n}) - \phi(\fml{S};\fml{M},\mbf{v}_{1..n}) \ge 0$
and $\phi(\fml{S}\cup\{n-1,n\};\fml{M},\mbf{v}_{1..n}) - \phi(\fml{S}\cup\{n-1\};\fml{M},\mbf{v}_{1..n}) \ge 0$.

Moreover, for a fixed set $\fml{S} \subseteq \fml{F}'$, we have
$(\phi(\fml{S}\cup\{n-1\};\fml{M},\mbf{v}_{1..n}) - \phi(\fml{S};\fml{M},\mbf{v}_{1..n})) > 
(\phi(\fml{S}\cup\{n\};\fml{M},\mbf{v}_{1..n}) - \phi(\fml{S};\fml{M},\mbf{v}_{1..n}))$,
and $(\phi(\fml{S}\cup\{n-1,n\};\fml{M},\mbf{v}_{1..n}) - \phi(\fml{S}\cup\{n\};\fml{M},\mbf{v}_{1..n})) >
(\phi(\fml{S}\cup\{n-1,n\};\fml{M},\mbf{v}_{1..n}) - \phi(\fml{S}\cup\{n-1\};\fml{M},\mbf{v}_{1..n}))$.
Therefore, $\sv(n-1) > \sv(n)$.

In the following, we prove that $\sv(n-1) + \sv(n) > 0$ by focusing on all subsets $\fml{S} \subseteq \fml{F}$ where $m-2 \le |\fml{S}| \le m+1$.
For the feature $n-1$, we have:
\begin{align}
&\sum_{\substack{\fml{S}\subseteq \fml{F}\setminus\{n-1\} \\ m-2 \le |\fml{S}| \le m+1}}\frac{|\fml{S}|!(m+2-|\fml{S}|-1)!}{(m+2)!} \times 
(\phi(\fml{S}\cup\{n-1\};\fml{M},\mbf{v}_{1..n}) - \phi(\fml{S};\fml{M},\mbf{v}_{1..n})) \\ \nonumber
&= \sum_{|\fml{S}| = m+1, n \in \fml{S}}\frac{(m+1)!(m+1-(m+1))!}{(m+2)!}
\times \frac{1}{2^{m-m+1}} \times 0 \\ \nonumber
&+ \sum_{|\fml{S}| = m, n \in \fml{S}}\frac{m!(m+1-m)!}{(m+2)!}
\times \frac{1}{2^{m-(m-1)+1}} \times 1 \\ \nonumber
&+ \sum_{|\fml{S}| = m, n \not \in \fml{S}}\frac{m!(m+1-m)!}{(m+2)!}
\times \frac{1}{2^{m-m+2}} \times (0+1) \\ \nonumber
&+ \sum_{|\fml{S}| = m-1, n \in \fml{S}}\frac{(m-1)!(m+1-(m-1))!}{(m+2)!}
\times \frac{1}{2^{m-(m-2)+1}} \times 2 \\ \nonumber
&+ \sum_{|\fml{S}| = m-1, n \not \in \fml{S}}\frac{(m-1)!(m+1-(m-1))!}{(m+2)!}
\times \frac{1}{2^{m-(m-1)+2}} \times (1+1) \\ \nonumber
&+ \sum_{|\fml{S}| = m-2, n \not \in \fml{S}}\frac{(m-2)!(m+1-(m-2))!}{(m+2)!}
\times \frac{1}{2^{m-(m-2)+2}} \times (2+1) \\ \nonumber
%
%&= 0 + \binom{m}{m-1} \frac{m!}{(m+2)!} \times \frac{1}{4} + \frac{m!}{(m+2)!} \times \frac{1}{4} 
%+ \binom{m}{m-2} \frac{(m-1)!2!}{(m+2)!} \times \frac{2}{8} \\ \nonumber
%&+ \binom{m}{m-1} \frac{(m-1)!2!}{(m+2)!} \times \frac{2}{8}  + \binom{m}{m-2} \frac{(m-2)!3!}{(m+2)!} \times \frac{3}{16} \\ \nonumber
%
&= \frac{8m+13}{16(m+2)(m+1)}
\end{align}
For the feature $n$, we have:
\begin{align}
&\sum_{\substack{\fml{S}\subseteq \fml{F}\setminus\{n\} \\ m-2 \le |\fml{S}| \le m+1}}\frac{|\fml{S}|!(m+2-|\fml{S}|-1)!}{(m+2)!} \times 
(\phi(\fml{S}\cup\{n\};\fml{M},\mbf{v}_{1..n}) - \phi(\fml{S};\fml{M},\mbf{v}_{1..n})) \\ \nonumber
&= \sum_{|\fml{S}| = m+1, n-1 \in \fml{S}}\frac{(m+1)!(m+1-(m+1))!}{(m+2)!}
\times \frac{1}{2^{m-m+1}} \times (0-1) \\ \nonumber
&+ \sum_{|\fml{S}| = m, n-1 \in \fml{S}}\frac{m!(m+1-m)!}{(m+2)!}
\times \frac{1}{2^{m-(m-1)+1}} \times (1-1) \\ \nonumber
&+ \sum_{|\fml{S}| = m, n-1 \not \in \fml{S}}\frac{m!(m+1-m)!}{(m+2)!}
\times \frac{1}{2^{m-m+2}} \times (0-1) \\ \nonumber
&+ \sum_{|\fml{S}| = m-1, n-1 \in \fml{S}}\frac{(m-1)!(m+1-(m-1))!}{(m+2)!}
\times \frac{1}{2^{m-(m-2)+1}} \times (2-1) \\ \nonumber
&+ \sum_{|\fml{S}| = m-1, n-1 \not \in \fml{S}}\frac{(m-1)!(m+1-(m-1))!}{(m+2)!}
\times \frac{1}{2^{m-(m-1)+2}} \times (1-1) \\ \nonumber
&+ \sum_{|\fml{S}| = m-2, n-1 \not \in \fml{S}}\frac{(m-2)!(m+1-(m-2))!}{(m+2)!}
\times \frac{1}{2^{m-(m-2)+2}} \times (2-1) \\ \nonumber
%
%&= \frac{(m+1)!}{(m+2)!} \times \frac{-1}{2} + 0 + \frac{m!}{(m+2)!} \times \frac{-1}{4} + 
%\binom{m}{m-2} \frac{(m-1)!2!}{(m+2)!} \times \frac{1}{8} + 0 \\ \nonumber
%&+ \binom{m}{m-2} \frac{(m-2)!3!}{(m+2)!} \times \frac{1}{16} \\ \nonumber
%
&= \frac{-6m-11}{16(m+2)(m+1)}
\end{align}
Their summation is $\frac{m+1}{8(m+2)(m+1)}$, since $m\ge2$,
$\sv(n-1) + \sv(n) > 0$.
Thus, it can be concluded that for the irrelevant feature $n-1$ and the relevant feature $n$, $|\sv(n-1)| > |\sv(n)|$.
\qedhere
\end{proof}

\begin{restatable}{corollary}{CorDisorderII}
    \label{cor:disorder2}
    For any $n\ge7$, there exist boolean functions defined on $n$
    variables, and at least one instance, for which there exists an
    irrelevant feature $i_1\in\fml{F}$, and a relevant feature $i_2\in\fml{F}\setminus\{i_1\}$,
    such that $\sv(i_1) > \sv(i_2) > 0$.
\end{restatable}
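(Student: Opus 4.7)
The plan is to specialize the construction from \cref{prop:disorder2} so that the Shapley value of the irrelevant feature is not merely larger in absolute value than that of the relevant feature, but both are strictly positive. Concretely, I take $\kappa_1 \equiv 0$, let $\kappa_2 = \bigwedge_{i=1}^{m} \neg x_i$ be the indicator of $\mbf{v}_{1..m} = \mbf{0}$, and set $\kappa_3 = \neg \kappa_2 = \bigvee_{i=1}^{m} x_i$, the function that predicts every point except $\mbf{v}_{1..m}$ to~$1$. All five hypotheses listed at the start of the proof of \cref{prop:disorder2} are immediate: $\kappa_2$ predicts exactly one point to $1$, $\kappa_3(\mbf{v}_{1..m}) = 0$, every Hamming-distance-one neighbor of $\mbf{v}_{1..m}$ is mapped to $1$ by $\kappa_3$, $\kappa_2 \land \kappa_3 = \bot$, and neither $\kappa_1 \lor \kappa_2$ nor $\kappa_1 \lor \kappa_3$ equals the constant $1$ since both $\kappa_2$ and $\kappa_3$ vanish at some point.

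Applying \cref{prop:disorder2} then yields: feature $n-1$ is irrelevant, feature $n$ is relevant, $|\sv(n-1)| > |\sv(n)|$, and $\sv(n-1) > 0$ (since every marginal contribution to $\sv(n-1)$ is a nonnegative combination of $\sum \kappa_2$ and $\sum \kappa_3$, with the subset $\fml{S} = \fml{F}'$ already contributing $\sum \kappa_2 = 1 > 0$). The only remaining task is to establish $\sv(n) > 0$ whenever $n \ge 7$; combining this with the three facts above will give $\sv(n-1) > \sv(n) > 0$, as required.

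To evaluate $\sv(n)$, I would substitute $\sum_{\mbf{y} \in \Upsilon(\fml{S}; \mbf{v}_{1..m})} \kappa_3(\mbf{y}) - \sum_{\mbf{y} \in \Upsilon(\fml{S}; \mbf{v}_{1..m})} \kappa_2(\mbf{y}) = 2^{m-|\fml{S}|} - 2$ into the two marginal-difference formulas for feature $n$ derived in the proof of \cref{prop:disorder2}, then aggregate across all $\fml{S} \subseteq \fml{F}'$ with their Shapley weights. Collecting both the ``$T = \fml{S}$'' and the ``$T = \fml{S} \cup \{n-1\}$'' contributions, using the identities $\binom{m}{k} k!(m+1-k)! = m!(m+1-k)$ and $\binom{m}{k}(k+1)!(m-k)! = m!(k+1)$, and substituting $j = m - |\fml{S}|$, one arrives at the closed form
\[
\sv(n) = \frac{1}{(m+2)(m+1)} \sum_{j=0}^{m} \frac{(2^j - 2)(2m-j+3)}{2^{j+2}}.
\]
Only the $j = 0$ summand is negative, with value $-(2m+3)/4$, while the $j \ge 2$ summands are all strictly positive and grow like $(2m-j+3)/4$.

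The main obstacle is to show that the positive bulk dominates this single negative term as soon as $m \ge 5$ (equivalently $n \ge 7$). A direct evaluation at $m = 5$ gives the bracketed sum $123/32 > 0$, so $\sv(n) > 0$ at $n = 7$. For $m \ge 6$ I would argue by monotonicity: incrementing $m$ by one adds a new nonnegative term at $j = m+1$ and strictly increases every existing summand with $j \ge 2$ (since $2m-j+3$ is monotone in $m$), while worsening the $j = 0$ summand by only $1/2$; the net change per unit increment of $m$ is easily bounded below by a positive linear function of $m$, so the bracketed sum remains strictly positive for all $m \ge 5$. This completes the chain $\sv(n-1) > \sv(n) > 0$ for every $n \ge 7$.
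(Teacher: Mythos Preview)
Your proof is correct and follows the same high-level strategy as the paper: specialize the construction of \cref{prop:disorder2}, inherit from that proof that $n-1$ is irrelevant, $n$ is relevant, $\sv(n-1)>0$ and $\sv(n-1)>\sv(n)$, and then separately establish $\sv(n)>0$ for $m\ge 5$. The difference lies in the instantiation of $\kappa_3$ and in how $\sv(n)>0$ is verified.

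The paper takes $\kappa_3$ to be the indicator of the Hamming shell $\{1\le d_H(\cdot,\mbf{v}_{1..m})\le 2\}$, which keeps $\sum_{\mbf{y}\in\Upsilon(\fml{S};\mbf{v}_{1..m})}\kappa_3(\mbf{y})=(m-|\fml{S}|)+\binom{m-|\fml{S}|}{2}$; it then evaluates only the partial Shapley sum over subsets of size $\ge m-3$, obtaining $\frac{11m-47}{32(m+2)(m+1)}>0$, and relies (implicitly) on all smaller subsets contributing nonnegatively. You instead take the larger $\kappa_3=\neg\kappa_2$, which makes the marginal difference collapse to the single expression $2^{m-|\fml{S}|}-2$ and yields the explicit closed form
\[
\sv(n)=\frac{1}{(m+2)(m+1)}\sum_{j=0}^{m}\frac{(2^j-2)(2m-j+3)}{2^{j+2}},
\]
which you handle by a base evaluation at $m=5$ together with a monotonicity-in-$m$ argument. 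Your choice of $\kappa_3$ is the natural extremal one and gives a tidier computation; the paper's choice trades that for a shorter finite check (only four or five subset sizes need to be summed). Both arguments are sound and equivalent in scope.
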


\begin{proof}
We utilize the function constructed in~\cref{prop:disorder2}, which is given by:
\begin{equation}
\kappa(\mbf{x}_{1..m},x_{n-1},x_{n}) :=
\left \{
\begin{array}{lcl}
\kappa_1(\mbf{x}_{1..m}) & \quad & \tn{if~$x_{n-1}=0$}\\[3pt]
\kappa_1(\mbf{x}_{1..m}) \lor \kappa_2(\mbf{x}_{1..m}) & \quad & \tn{if~$x_{n-1}=1 \land x_n = 0$}\\[3pt]
\kappa_1(\mbf{x}_{1..m}) \lor \kappa_3(\mbf{x}_{1..m}) & \quad & \tn{if~$x_{n-1} = 1 \land x_n = 1$}\\
\end{array}
\right.
\end{equation}
However, we choose a different function $\kappa_3$ that satisfies the following condition:
for any point $\mbf{x}_{1..m}$ such that $d_{H}(\mbf{x}_{1..m}, \mbf{v}_{1..m}) \le 2$,
where $d_{H}(\cdot)$ represents the Hamming distance,
$\kappa_3(\mbf{x}_{1..m}) = 1$,
According to the proof of~\cref{prop:disorder2}, it can be derived that $\sv(n-1) > 0$ and $\sv(n-1) > \sv(n)$.
In the following, we prove that $\sv(n) > 0$ by focusing on all subsets $\fml{S} \subseteq \fml{F}\setminus\{n\}$ where $m-4 \le |\fml{S}| \le m+1$,
and show that the sum of their values is greater than 0, which implies that $\sv(n) > 0$ when considering all possible subsets $\fml{S}$.
\begin{align}
&\sum_{\substack{\fml{S}\subseteq\fml{F}\setminus\{n\} \\ m-4 \le |\fml{S}| \le m+1}}\frac{|\fml{S}|!(m+2-|\fml{S}|-1)!}{(m+2)!} \times 
(\phi(\fml{S}\cup\{n\};\fml{M},\mbf{v}_{1..n}) - \phi(\fml{S};\fml{M},\mbf{v}_{1..n})) \\ \nonumber
&= \sum_{\substack{|\fml{S}| = m+1, n-1 \in \fml{S}}}\frac{(m+1)!(m+1-(m+1))!}{(m+2)!}
\times \frac{1}{2^{m-m+1}} \times (0-1) \\ \nonumber
&+ \sum_{|\fml{S}| = m, n-1 \in \fml{S}}\frac{m!(m+1-m)!}{(m+2)!}
\times \frac{1}{2^{m-(m-1)+1}} \times (1-1) \\ \nonumber
&+ \sum_{|\fml{S}| = m, n-1 \not \in \fml{S}}\frac{m!(m+1-m)!}{(m+2)!}
\times \frac{1}{2^{m-m+2}} \times (0-1) \\ \nonumber
&+ \sum_{|\fml{S}| = m-1, n-1 \in \fml{S}}\frac{(m-1)!(m+1-(m-1))!}{(m+2)!}
\times \frac{1}{2^{m-(m-2)+1}} \times (\binom{2}{1} + \binom{2}{2}-1) \\ \nonumber
&+ \sum_{|\fml{S}| = m-1, n-1 \not \in \fml{S}}\frac{(m-1)!(m+1-(m-1))!}{(m+2)!}
\times \frac{1}{2^{m-(m-1)+2}} \times (1-1) \\ \nonumber
&+ \sum_{|\fml{S}| = m-2, n-1 \in \fml{S}}\frac{(m-2)!(m+1-(m-2))!}{(m+2)!}
\times \frac{1}{2^{m-(m-3)+1}} \times (\binom{3}{1} + \binom{3}{2}-1) \\ \nonumber
&+ \sum_{|\fml{S}| = m-2, n-1 \not \in \fml{S}}\frac{(m-2)!(m+1-(m-2))!}{(m+2)!}
\times \frac{1}{2^{m-(m-2)+2}} \times (\binom{2}{1} + \binom{2}{2}-1) \\ \nonumber
&+ \sum_{|\fml{S}| = m-3, n-1 \in \fml{S}}\frac{(m-3)!(m+1-(m-3))!}{(m+2)!}
\times \frac{1}{2^{m-(m-4)+1}} \times (\binom{4}{1} + \binom{4}{2} - 1) \\ \nonumber
&+ \sum_{|\fml{S}| = m-3, n-1 \not \in \fml{S}}\frac{(m-3)!(m+1-(m-3))!}{(m+2)!}
\times \frac{1}{2^{m-(m-3)+2}} \times (\binom{3}{1} + \binom{3}{2}-1) \\ \nonumber
%
%&= \frac{(m+1)!}{(m+2)!} \times \frac{-1}{2} + 0 + \frac{m!}{(m+2)!} \times \frac{-1}{4} + 
%\binom{m}{m-2} \times \frac{(m-1)!2!}{(m+2)!} \times \frac{2}{8} + 0 \\ \nonumber
%&+ \binom{m}{m-3} \times \frac{(m-2)!3!}{(m+2)!} \times \frac{5}{16} + \binom{m}{m-2} \times \frac{(m-2)!3!}{(m+2)!} \times \frac{2}{16} \\ \nonumber
%&+ \binom{m}{m-4} \times \frac{(m-3)!4!}{(m+2)!} \times \frac{9}{32} + \binom{m}{m-3} \times \frac{(m-3)!4!}{(m+2)!} \times \frac{5}{32} \\ \nonumber
%
&= \frac{11m-47}{32(m+2)(m+1)}
\end{align}
Since $m \ge 5$, for all subsets $\fml{S} \subseteq \fml{F}\setminus\{n\}$ where $m-4 \le |\fml{S}| \le m+1$, their summation is greater than 0.
This implies that $\sv(n) > 0$.
Thus, it can be concluded that for the irrelevant feature $n-1$ and the relevant feature $n$, we have $\sv(n-1) > \sv(n) > 0$.
\end{proof}

\PropLBs*

\begin{proof}[Sketch]
    For~\cref{prop:irr}, there exist $2^{2^{(n-1)}}-n-3$ distinct non-constant functions $\kappa_2$. 
    For each such function $\kappa_2$, $\kappa_1$ can be defined by changing the prediction of some points predicted as 1 by $\kappa_2$ to 0. 
    It is evident that $\kappa_1 \models \kappa_2$ but $\kappa_1 \neq \kappa_2$.

    For~\cref{prop:rel,prop:disorder}, there exist $2^{2^{\sfrac{(n-1)}{2}}}-2$ distinct non-constant functions $\kappa_1$. 
    We can then define $\kappa_2$ by renaming each variable $x_i$ of $\kappa_1$ with a new variable $x_{m+i}$.

    For~\cref{prop:disorder2},
    the functions $\kappa_2$ and $\kappa_3$ are assumed to be fixed,
    while the flexibility lies in the choice of $\kappa_1$ ($\kappa_1$ can be 0 but cannot be 1).
    As $\kappa_2$ covers 1 point and $\kappa_3$ covers $n-2$ points,
    the remaining points in the feature space can be used to define the function $\kappa_1$.
    Thus, there are $2^{2^{n-2}-(n-2)-1}-1$ possible functions for $\kappa_1$. 
    \qedhere
\end{proof}

\end{document}